\newcommand{\nn}{\nonumber}
\newcommand{\YueE}[1]{{\color{black}{\bf}#1}}
\newcommand{\compose}{\mathbin{\vert\vert\vert}}
\newcommand{\trans}[1]{\overset{#1}\rightarrow}
\newtheorem{assumption}[theorem]{Assumption}
\newtheorem{remark}{Remark}
\begin{document}

\markboth{Y. Wang, L. R. Humphrey, Z. Liao, and H. Zheng}{Trust-based Multi-Robot Symbolic Motion Planning with a Human-in-the-Loop}

\title[Trust-based Multi-Robot Symbolic Motion Planning with a Human-in-the-Loop]{Trust-based Multi-Robot Symbolic Motion Planning with a Human-in-the-Loop}  
\author{Yue Wang}
\orcid{0000-0003-0146-7262}
\affiliation{%
  \institution{Clemson University}
  \department{Department of Mechanical Engineering}
    \streetaddress{237 Fluor Daniel Building}
  \city{Clemson}
  \state{SC}
  \postcode{29630}
  \country{USA}}
\author{Laura R. Humphrey}
\affiliation{%
  \institution{Air Force Research Laboratory}
  \department{Control Science Center of Excellence}
  \city{WFAFB}
  \state{OH}
  \postcode{45433}
  \country{USA}
}
\author{Zhanrui Liao}
\affiliation{%
  \institution{Clemson University}
  \department{Department of Mechanical Engineering}
    \streetaddress{237 Fluor Daniel Building}
  \city{Clemson}
  \state{SC}
  \postcode{29630}
  \country{USA}}
\author{Huanfei Zheng}
\affiliation{%
  \institution{Clemson University}
  \department{Department of Mechanical Engineering}
    \streetaddress{237 Fluor Daniel Building}
  \city{Clemson}
  \state{SC}
  \postcode{29630}
  \country{USA}}

\begin{abstract}
	Symbolic motion planning for robots is the process of specifying and planning robot tasks in a discrete space, then carrying them out in a continuous space in a manner that preserves the discrete-level task specifications. Despite progress in symbolic motion planning, many challenges remain, including addressing scalability for multi-robot systems and improving solutions by incorporating human intelligence. In this paper, distributed symbolic motion planning for multi-robot systems is developed to address scalability. More specifically, compositional reasoning approaches are developed to decompose the global planning problem, and atomic propositions for observation, communication, and control are proposed to address inter-robot collision avoidance. To improve solution quality and adaptability, a dynamic, quantitative, and probabilistic human-to-robot trust model is developed to aid this decomposition. Furthermore, a trust-based real-time switching framework is proposed to switch between autonomous and manual motion planning for tradeoffs between task safety and efficiency. Deadlock- and livelock-free algorithms are designed to guarantee reachability of goals with a human-in-the-loop. A set of non-trivial multi-robot simulations with direct human input and trust evaluation are provided demonstrating the successful implementation of the trust-based multi-robot symbolic motion planning methods. 
\end{abstract}

%

\begin{CCSXML}

<ccs2012>

<concept>

<concept_id>10010147.10010178.10010199.10010204</concept_id>
 <concept_desc>Computing methodologies~Robotic planning</concept_desc>

<concept_significance>500</concept_significance>
</concept>

<concept>

<concept_id>10010147.10010178.10010199.10010202</concept_id>
 <concept_desc>Computing methodologies~Multi-agent planning</concept_desc>

<concept_significance>300</concept_significance>
</concept>

<concept>

<concept_id>10003120.10003121.10003122.10003332</concept_id>
 <concept_desc>Human-centered computing~User models</concept_desc>

<concept_significance>300</concept_significance>
</concept>

<concept>

<concept_id>10003752.10003790.10011192</concept_id>
 <concept_desc>Theory of computation~Verification by model checking</concept_desc>

<concept_significance>300</concept_significance>
</concept>

<concept>

<concept_id>10011007.10010940.10010992.10010998.10003791</concept_id>
 <concept_desc>Software and its engineering~Model checking</concept_desc>

<concept_significance>300</concept_significance>
</concept>
</ccs2012>

\end{CCSXML}

 \ccsdesc[500]{Computing methodologies~Robotic planning}
 \ccsdesc[300]{Computing methodologies~Multi-agent planning}
 \ccsdesc[300]{Human-centered computing~User models}
 \ccsdesc[300]{Theory of computation~Verification by model checking}
 \ccsdesc[300]{Software and its engineering~Model checking}

%
%


\keywords{Symbolic Motion Planning; Multi-Robot Systems; Trust; Human-in-the-Loop}

\thanks{This work is supported by the Air Force Research Laboratory's Summer Faculty Fellowship Program (SFFP) and the Air Force Office of Scientific Research Young Investigator Program under grant no. FA9550-17-1-0050.
	
	Author's addresses: Y. Wang, Z. Liao, and H. Zheng, Department of Mechanical Engineering, Clemson University, Clemson SC 39634; L. R. Humphrey, Wright Patterson Air Force Research Laboratory, OH 45433.
}

\maketitle

\section{Introduction}

Despite advances in autonomy for robotic systems, human supervision/collaboration is often still necessary to ensure safe and efficient operations in uncertain, dynamic, or noisy environments where robot sensing and perception may not be fully reliable. While ideally autonomous robots are expected to be self-sufficient, there are practical tradeoffs between cost and performance. On one hand, humans excel at high-level decision-making in such environments and can help autonomous robots achieve better performance while keeping design costs low. On the other hand, human error is a main cause of machine malfunctions~\cite{rouse1983human,adams2002hri}, and human performance degrades when overloaded~\cite{dhillon1997safety,crandall2005validating}. When designing autonomous robotic systems, it is therefore important to consider factors related to human-robot interaction (HRI) \cite{GoSc-HCI-07}. However, although much research has been conducted on the development of effective approaches for HRI, extant solutions remain highly specialized and focused on human-machine interface (HMI) design~\cite{BoBaSi-TSMC-13}. The modeling, analysis, and implementation of effective HRI remains largely an open problem~\cite{hayes2013challenges}. The current design process for robotic systems, especially in high-level decision-making and coordination, is still largely one of trial and error. In particular, the process often lacks quantitative models and real-time analytic approaches that could be used to provide safety and performance guarantees. The HRI problem in which a single human must interact with multiple autonomous robots is especially challenging due to the problem size, the need for robot coordination, the possibility of unintended emergent
behaviors, etc.

Although much work has been undertaken to characterize physical HRI (pHRI) in terms of safety, performance, adaptability, etc.~\cite{ikemoto2012physical,haddadin2011safe}, an important factor to consider with respect to social HRI (sHRI) and cognitive HRI (cHRI) is human trust in autonomous robots. Establishing trust in robots is the bottleneck in the development and integration of HRI systems. Trust can be defined as~\cite{lee2004trust}  {(page 51)}
\begin{quote}
	``the attitude that an agent will help achieve an individual's
goals in a situation characterized by uncertainty and vulnerability."
\end{quote} 
Trust is a dynamic feature of HRI~\cite{Levin2006trust} that heavily affects a human's acceptance and hence use of a robot \cite{wagner2009role,hancock2011meta}. Humans respond socially to robots, establishing a level of trust to manage workload not possible with mere human endeavor~\cite{freedy2007measurement,hoff2015trust}. Without trust, humans are likely to ignore or dismiss the assistance of robots and opt to complete a task on their own. Informed trust is an accurate assessment of when and how much autonomy should be employed, and when to intervene. Both economic and ergonomic benefits can be obtained by properly trusting the reliability of autonomous robots. Humans can either gain or lose trust in robots based on the progress of the task~\cite{crandall2003towards}. Moreover, consideration of trust is especially important for the supervisory control of multiple robots, since the tasks must be carefully allocated to ensure time-critical issues are addressed while human workload is kept within acceptable bounds \cite{BaHaKiSc-ROMAN-08,ruff2002human}. 

Some progress in addressing these deficiencies has been made through the application of formal methods -- 
i.e., mathematically-based tools and techniques for system specification, 
design, and verification \cite{baier2008principles} -- 
to problems involving HRI \cite{BoBaSi-TSMC-13,humphrey2014formal}. The temporal logics commonly used
in formal methods provide a high-level, human-like language for specifying desired properties or behaviors of a
system, which can then be used to either verify or synthesize provably correct designs~\cite{HuRy-Book-12,baier2008principles}.
Formal methods have been utilized in a wide range of applications including verification of HMI designs \cite{BoBaSi-TSMC-13} and implementation of robot motion planning and control, leading to approaches such as symbolic motion planning for autonomous robots \cite{belta2007symbolic}. 

 
{Symbolic motion planning extends traditional motion planning -- generally concerned with fast computation of efficient, optimal, and dynamically feasible paths that allow mobile robots to reach a set of goal locations while avoiding obstacles \cite{algorithmssteven} -- by also incorporating higher-level goals and constraints expressed as temporal logic specifications, e.g., on the order in which goals should be reached and on regions that should or should not be visited.} More specifically, in symbolic robot motion planning, the workspace is discretized, and different regions are given ``symbolic'' labels. A set of specifications for the robots can then be given in terms of these symbolic labels, e.g., ``go to locations A and B while avoiding obstacles.'' Plans that meet the specifications are generated based on the discretized representation of the workspace. These high-level discrete plans are then converted into low-level reference trajectories and hybrid control laws based on robot dynamics in the continuous workspace such that the specifications met in the discrete representation are preserved~\cite{belta2005discrete}.  {This preservation of properties between the continuous and discrete workspace representations is important, since a major goal of symbolic motion planning is providing strong guarantees on the correctness of generated plans. In summary, symbolic motion planning approaches seek to guarantee properties at the discrete, symbolic level while also satisfying the types of continuous-level properties that would be addressed by traditional motion planning approaches.}

{Research in symbolic motion planning has focused on a number of areas, including efficient workspace discretization and plan computation \cite{fainekos2005temporal}, computation of control protocols for executing plans guaranteed to meet specifications in the presence of uncontrolled factors in the environment \cite{wongpiromsarn2012receding}, sampling-based approaches \cite{karaman2009sampling} and approximate dynamic programming approaches \cite{papusha2016automata} that do not require explicit discretization of the workspace. However,} challenges remain in addressing the scalability of these approaches for multi-robot systems~\cite{kloetzer2010multirobot,kloetzer2010icra} and in incorporating analysis of human behaviors to improve joint human-robot system performance when a human is able to interact with the system during plan execution~\cite{feng2015controller,fu2015pareto}. In particular, the specification language has been restricted to static and a priori known environments, and the extension to multi-robot systems is challenging due to the infamous ``state-space explosion" problem since both the abstraction and the synthesis algorithms scale exponentially with the dimension of the configuration space~\cite{ClGrPe-MIT-00,saha2014automated}. Since the explored system state needs to be stored in memory, this makes the problem intractable for systems with realistic size.  Hence, there has been limited work on symbolic motion planning for fully autonomous multi-agent systems~\cite{KaLi-Automatica-11,saha2014automated,aksaray2015distributed,tumova2016multi} and very few recent results on HRI systems based on formal verification~\cite{feng2015controller,li2014synthesis,fu2015pareto}. 
{
The papers~\cite{tumova2016multi,KaLi-Automatica-11} investigate the task decomposition problem to address the scalability in multi-agent planning. 
The work \cite{tumova2016multi} converts the linear temporal logic (LTL) specification to B{\"u}chi automaton and uses automaton-based model checking for multi-agent planning. To find the interdependency among agents, an event-based synchronization approach is proposed.
The work~\cite{KaLi-Automatica-11} also uses automaton-based model checking for decentralized control of multi-agent systems. The decomposability and framework of parallel decomposition are discussed. The paper~\cite{saha2014automated} develops an offline compositional multi-robot motion planning framework based on precomputed motion primitives and employs a satisfiability modulo theories (SMT) solver to synthesize trajectories for the individual robots. The work~\cite{aksaray2015distributed} considers a multi-agent persistent surveillance problem 
by minimizing the sum of time between two consecutive visits to regions of interest while satisfying their LTL specifications. A receding horizon controller with automata-theoretic approach is designed to plan each agent's trajectory independently using only local information. The work~\cite{alur2016compositional} proposes a framework for controller synthesis based on compositional reactive synthesis for multi-agent systems. Different objectives such as collision avoidance, formation control and bounded reachability are considered.} 
{We note that our work differs from these previous works in several ways. With regard to work on multi-agent systems, incorporating uncertainty due to both the human and a priori unknown obstacles in the environment creates new challenges. For instance, \cite{tumova2016multi} provides a distributed approach to multi-robot planning that is similar to ours, but it does not account for possible collisions between robots or uncertainty due to human interaction, which must be handled carefully in order to avoid problems such as deadlock and livelock. With regard to work on HRI systems, our approach does not rely on traditional automata-based or similar models of human behavior in order to make guarantees on system performance, allowing us to address more dynamic characteristics such as trust.} We also note that there is a vast body of literature on robot motion planning, and the presented approach serves as a step towards high-level motion planning for robotic systems with a human-in-the-loop at the discrete, symbolic level rather than an alternative to existing motion planning approaches at the continuous level. 

In this paper, we investigate methods for improving the scalability, safety, performance, and adaptability of symbolic motion planning for multi-robot systems, taking into account the effects of human trust. Extending our preliminary works~\cite{spencer2016iros,MaWa-DSCC-2016}, the main contributions of this paper are as follows. 
\begin{enumerate}
	\item We develop a dynamic, quantitative, and probabilistic human-to-robot trust model to compute the evolution of trust during real-time robotic operations. Our proposed trust model integrates the time-series trust model~\cite{sadrspringer2015,WaShZhWa-CPS-15,SaSaWa-CASE-2016} and the Online Probabilistic Trust Inference Model (OPTIMo)~\cite{xu2015optimo} to compute trust quantitatively - a critical first step to enable trust-based multi-robot symbolic motion planning;
	\item We propose trust-based specification decomposition to address the scalability issue in distributed multi-robot symbolic motion planning. Compositional reasoning approaches are used to decompose the global task specification into local specifications for each robot. Atomic propositions (AP) that relate to observation, communication, and control~\cite{filippidis2012cdc} 
	are developed to resolve collisions between robots;
	\item Since high-level human planning is beneficial in certain circumstances but difficult to verify, we design a trust-triggered real-time switching framework to automatically switch between manual and autonomous robot motion planning for tradeoffs between task safety and efficiency. {Our main idea is to switch from autonomous to manual mode once sufficient trust has been developed and utilize human judgment to underpin riskier but more efficient solutions to path planning. This is the opposite to a number of other conceptions of HRI where the switch from autonomous to manual mode results from a loss of trust in the robot (e.g. the inverse trust metric in~\cite{floyd2015improving} that a robot uses to adapt behavior to increase operator's trust).} Also note that here we focus on the high-level switching strategies; the low-level continuous robot motion execution under either manual or autonomous motion planning is still automatic. This differs from the literature on automaton-based switching/shared control synthesis for human-robot teams~\cite{li2014synthesis,fu2015pareto};
	\item Deadlock- and livelock-free
algorithms are proposed to guarantee reachability of all the goals with a human-in-the-loop. We provide a formal proof for the correctness of the overall algorithm and motivate its need through a
discussion of inter-robot collision scenarios that cannot be resolved by purely using the methods summarized in (2);
	\item We perform non-trivial multi-robot simulations with direct human inputs (e.g., through gamepad and mouse with GUI designs) to demonstrate the real-time trust computation and the implementation of the proposed trust-based symbolic robot motion planning methods. We explore these methods in the context of an intelligence, surveillance, and reconnaissance (ISR) scenario. Other possible applications include search and rescue, DARPA Urban Challenge, warehouse management, intelligent transportation systems, etc. 
\end{enumerate}


The remainder of this paper is organized as follows. Section \ref{sec:humanRobotInteraction} introduces the problem setup and symbolic robot motion planning. Section \ref{sec:trustModel} presents a dynamic, quantitative, and probabilistic model of human-to-robot trust. Section \ref{sec:trustBasedSpecificationDecomposition} outlines the method for trust-based specification decomposition, and Section \ref{sec:realTimeTrustBasedSwitching} describes the method for switching between manual and autonomous motion planning. Section \ref{sec:liveness} proposes deadlock- and livelock-free algorithms to guarantee goal reachability with a human-in-the-loop. A set of simulations integrating Matlab and the NuSMV model checker~\cite{NuSMVlink} with direct human inputs is presented in Section \ref{sec:simulation}. Concluding remarks are given in Section \ref{sec:conclusion}.

\section{Symbolic Robot Motion Planning Considering HRI}
\label{sec:humanRobotInteraction}

\subsection{Problem Setup}
We consider an ISR scenario in which a team of $N$ robots represented by an index set $I_R = \{1, \ldots, N\} \subset \mathbb{Z}$, supervised by a human operator, must reach a set of $M$ goal destinations represented by the set $Goals = \{1, \ldots, M\} \subset \mathbb{Z}$ while avoiding collisions with stationary obstacles and with each other (i.e. mobile obstacles), as shown in Fig. \ref{fig:app1}. 
As is standard in symbolic robot motion planning, the workspace is discretized into  $Q$ regions or states represented by the set $W=\{w_1,w_2,\cdots,w_Q\}$, which are labeled with relevant properties (e.g., whether they contain an obstacle or goal). The regions can be in different shapes such as points, triangles, polytopes, or rectangles and the union of regions can be used to approximate the workspace of robots in an arbitrarily close manner. Depending on the robot dynamics and the corresponding low-level control, different cell decomposition strategies can be applied, as discussed later. Note this discretization can be performed to an arbitrary degree of accuracy; however, increasing the number of regions significantly increases the computational complexity of the planning problem. Most discretizations therefore significantly overapproximate certain features of the workspace, e.g., only a small portion of a region labeled as ``obstacle'' contains a real obstacle. The consequence is that planning through the workspace may be overly conservative, since paths that go through regions containing obstacles might be feasible in the continuous workspace. Though theoretically a finer discretization of the space would allow for computation of these paths, this may not be desirable in practice since it will lead to a very large state space which requires significant computational resources. 
In such a case, a combination of human judgment, assistance, and permission might be needed in order to make the attempt, motivating the need for a human operator.

\begin{figure}	
\centering
    \includegraphics[width=4.5in]{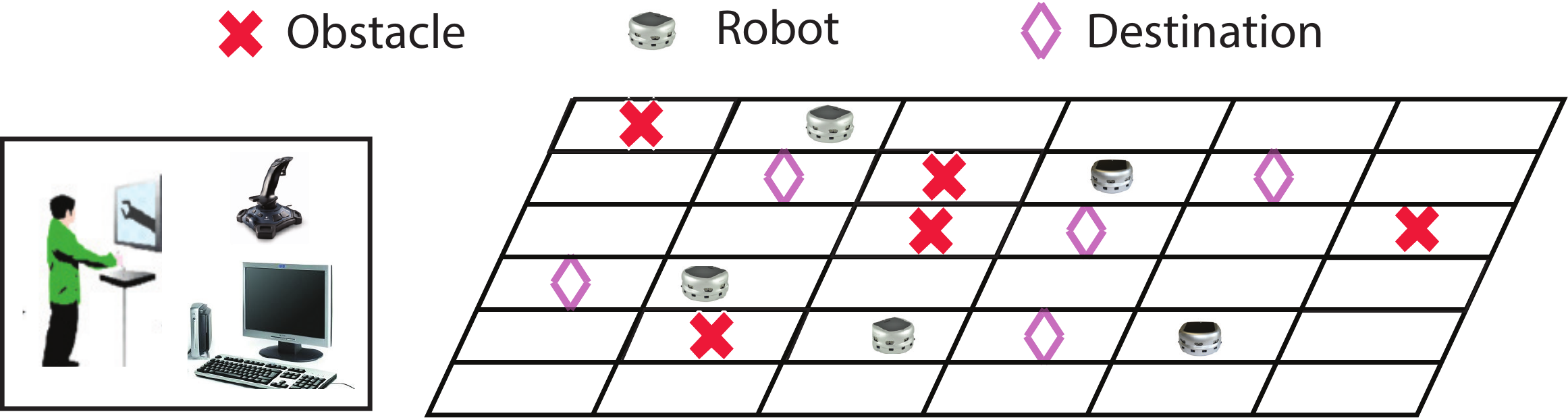}
	\caption{Multiple robots must reach a set of destinations while avoiding obstacles and collisions with other robots, taking shorter but riskier paths between obstacles with human oversight when trusted to do so. That is, when the human operator trusts a robot, he/she plans for robot motion through visual feedback from a robot's onboard camera using HMI such as keyboard, mouse, or joystick.}
	\label{fig:app1}
\end{figure}

We assume that the set of $M$ goal destinations is known from the start, and each goal must be reached by at least one robot while collisions with obstacles and between robots are avoided. This set of requirements forms a specification for the scenario. Our proposed planning scheme is implemented in a distributed manner, making use of compositional reasoning approaches to decompose the global specification. More specifically, the goal portion of the specification is decomposed such that each robot is assigned a subset of the goal destinations and locally synthesizes a plan to reach them. For obstacle avoidance, we assume obstacle locations are not known a priori, and so when a robot discovers an obstacle, it re-synthesizes a plan to reach its remaining goals after updating its representation of the workspace. We show that our planning approach is guaranteed to meet these specifications under some mild assumptions. We then take this planning approach and incorporate a method for inter-robot collision avoidance in which the involved robots locally collaborate to re-synthesize plans. We later modify the algorithm to guarantee the reachability of all $M$ goal destinations by
eliminating the possibility of deadlock and livelock. 

With respect to human interaction, a quantitative, probabilistic, and dynamic trust model based on robot performance, human performance, joint human-robot fault, human intervention, and feedback evaluation is used to estimate human trust in each of the robots throughout the scenario.
This estimate of trust affects the specification decomposition, with more trusted robots assigned more destinations. Trust is also used to determine when the robot should suggest navigating between obstacles, as this requires real-time switching between manual and autonomous motion planning.
Human consent for this switching is assumed to depend on the change of trust as well as whether or not the human is currently occupied with other tasks.
Each robot is assumed to follow a simple first-order kinematic equation of motion 
\begin{equation}
\dot{x}_i=u_i,~i\in I_R=\{1,2,\cdots,N\},
\label{eq:dynamics}
\end{equation}
where $x_i$ is the robot position and $u_i$ is the motion control input. We assume homogeneous robots with the same dynamics. 
To guarantee that the high-level discrete motion plan can be realized by some low-level continuous motion control, we use a rectangular partition and a linear quadratic regulator (LQR) controller~\cite{kirk2012optimal} for the robot kinematics (\ref{eq:dynamics}) so that the robot always seeks to reach the centroid of the next cell in the discrete path, i.e.,
\begin{equation}\label{eq:AP_control}
u_i(t)=-K_i(x_i(t)-x_i^d),~K_i=R^{-1}_iP_i(t),
\end{equation} 
where $K_i$ is some control gain, $x_i^d$ represents the centroid of a discrete cell, $P_i(t)$ is found by solving the Riccati equation $P_i(t)R^{-1}_iP_i(t)=Q_i$, and $R_i$, $Q_i$ are the weights on state and control input with a quadratic cost, respectively. Therefore, under this setup, for the high-level discrete plan, there always exists a low-level continuous control under (\ref{eq:dynamics}) given the rectangular cell partition. This guarantees the robot will never enter an unplanned region while moving between sequential regions in the planned path, allowing us to establish a bisimulation relation between the continuous state space used for control of the robot and the discretized state space used for planning~\cite{alur2000discrete}. 
Since we assume homogeneous multi-robot systems, we can use the same the partition
of the workspace for each robot for autonomous motion planning.
In more general cases, it has been proven that for any robot with affine dynamics (e.g. the kinematics (\ref{eq:dynamics}) is a special case of affine dynamics), the low-level continuous control and high-level discrete plan are bisimular under polygon triangulation~\cite{belta2005discrete}. Open source toolboxes such as TuLip (\url{http://tulip-control.sourceforge.net}) can be used to perform the partitioning/triangulization. 
Under manual motion planning, when the human trusts a robot, he/she will assign waypoints as high-level guidance. In that case, rather than moving between cell centroids, the robot will navigate autonomously between successive waypoints using the low-level controller (\ref{eq:AP_control}) where $x_i^d$ becomes the position of a waypoint, and no partition is needed. {In our system, the human can see the grid map as displayed in Fig.~\ref{fig:manualpath} and Fig.~\ref{fig:manual_path_final}, trust level (Fig.~\ref{fig:GUI}(a)), {visual information,} and support robot requests via {the robot's onboard camera and} HMIs (Fig.~\ref{fig:humaninput}) and/or graphical user interface (GUI) (Fig. \ref{fig:GUI}(b) and (c)) as depicted in Fig. \ref{fig:app1}.} More details regarding the HMI and GUI designs will be provided in the simulation section \ref{sec:simulation}.

Fig. \ref{fig:teamSchematic} shows the schematic of the proposed trust-based human-robot collaboration system and details the specification decomposition process and switching framework for one robot. 
Because robots are good at local tasks given their limited computing, sensing, and communication capabilities, an in-situ autonomous motion planner is developed based on local sensing and communication information to guarantee safe completion of the task. However, since humans excel at high-level planning, we allow the human to intervene in the robot motion planning if required to increase task efficiency. 
This combination of autonomous and manual motion planning allows the joint system to achieve
the global specification efficiently but without overloading the human operator. As planning and execution proceeds,
human trust in each robot evolves dynamically, with the human
choosing to collaborate more often with trusted robots.
To reduce computational complexity, the global specification for the HRI team is decomposed using compositional reasoning methods, with more trusted robots assigned more tasks and vice versa. To further integrate human intelligence while guaranteeing task safety, a trust-based real-time switching framework is developed, and an autonomous decision-making aid is designed. By default, the autonomous and safe motion planning is implemented to guarantee task completion based on an over-approximation of the task domain, ensuring that obstacles are avoided. Each robot then carries out the control synthesis process (including local configuration space abstraction, local path planning, and execution) based on its decomposed local specification. 
However, if the human trusts a robot and is not overloaded, manual motion planning may be requested for more efficient but riskier solutions, e.g., moving between close
obstacles. In such a case, the robot behavior under manual motion planning needs to be monitored so that if
any event that will violate the local task specification is detected, the autonomous motion planning will be activated again. {At this moment, the robot can either alert the operator to this change of operation mode, or leave it to the operator to see that an obstacle has been detected and notice that the robot has returned to its autonomous mode of operation\footnote{{In our simulation Section~\ref{sec:simulation}, the robot does not alert the change of mode when the autonomous mode is reactivated.}}.} Robot and human performance measurements, joint human-robot fault measurements, direct human intervention, and trust evaluation are used as feedback to update the trust model and specification decomposition. 

\begin{figure}[t]
\centering
\includegraphics[width = 4.0in]{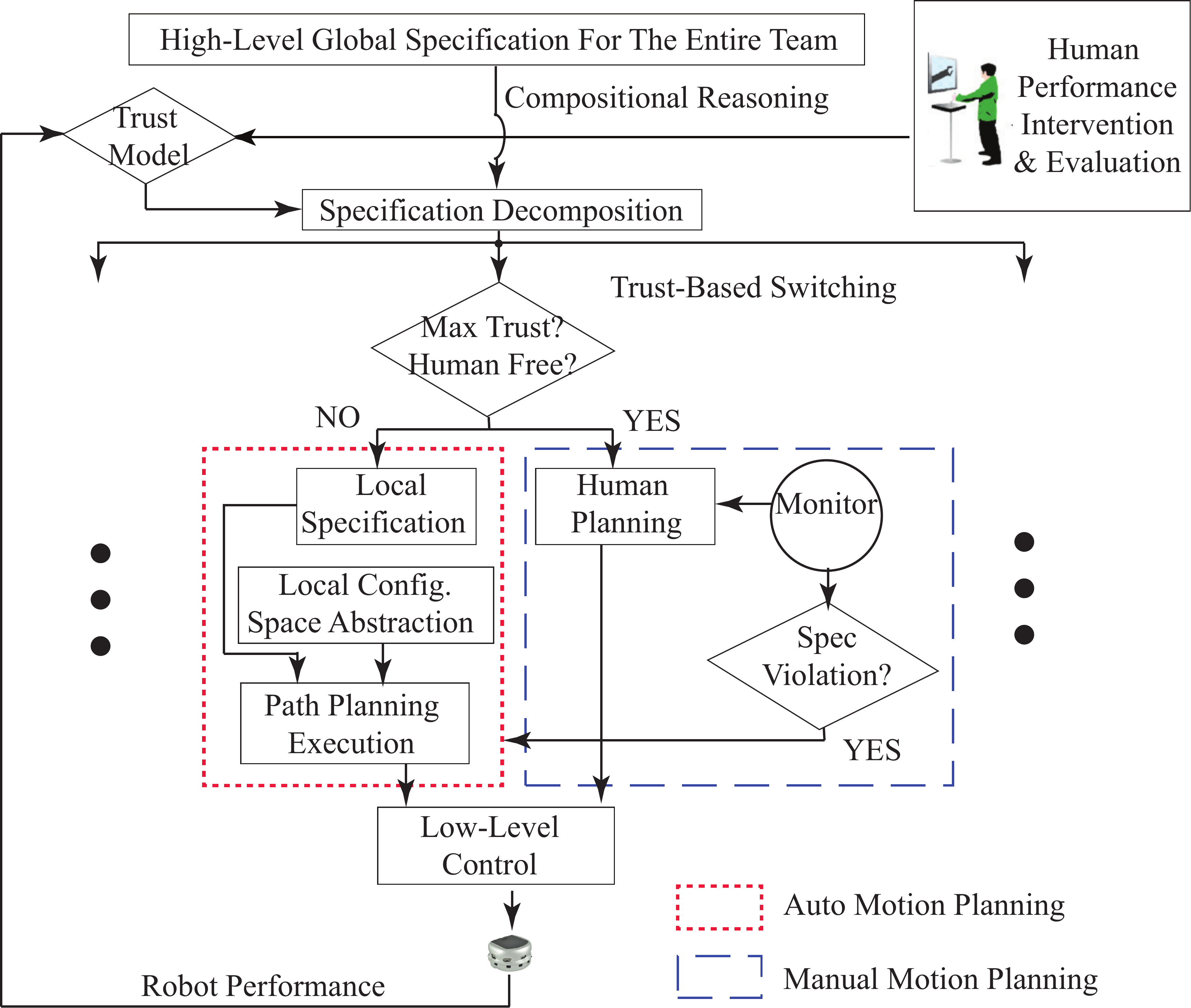}
\caption{Schematic diagram of trust-based symbolic motion planning for human and multi-robot collaboration systems.}
\label{fig:teamSchematic}
\end{figure}

\subsection{Symbolic Robot Motion Planning}
\label{subsec:symbolicMotionPlanning}


Symbolic robot motion planning is performed over a discrete abstraction of the robot's workspace, as previously
described. The set of discrete cells, the properties each cell satisfies, and the movements allowed between
cells in the workspace are often represented as a finite transition system, defined as follows. 

\begin{definition}[Finite Labeled Transition System]
	\label{def:TS}
	A finite labeled transition system is a tuple \mbox{$TS= (S, Act, \delta, I, \Pi, L)$} consisting of 
	\begin{enumerate}
		\item a finite set of states $S$,
		\item a finite set of actions $Act$,
		\item a transition relation $\delta \subseteq S \times Act \times S$,
		\item a set of initial states $I \in S$,
		\item a set of atomic propositions (AP) $\Pi$,
		\item and a labeling function $L : S \rightarrow 2^{\Pi}$.
	\end{enumerate}
	We define a path of a transition system as a sequence of states $\sigma = s_0 \overset{\alpha_0}{\rightarrow} s_1 \overset{\alpha_1}{\rightarrow} s_2 \overset{\alpha_2}{\rightarrow} \ldots$, where $s_k \in S$ is the state at step $k \geq 0$, $s_0 \in I$, and $(s_k, \alpha_k, s_{k+1}) \in \delta$ with $\alpha_k\in Act$. We define a trace as a sequence of labels that denote which APs are true in each state, i.e.,  $L(\sigma) = L(s_0) L(s_1) L(s_2) \ldots$.  
\end{definition}

Let us define a transition system $TS_w$ to represent the discrete abstraction of the workspace. Then $W$ is the set of cells in the workspace, $Act_w$ is the set of actions that represent the robot moving between cells, $\delta_w$ encodes pairs of adjacent cells, $I_w$ is the set of possible starting cells (in this case, all cells without obstacles), $\Pi_w$ contains atomic propositions $\pi^g_j for j \in Goals$ and $\pi^b_j for j \in Obs$ representing the presence of obstacles or goals in workspace cell $j$, and $L(\cdot)$ labels cells that contain obstacles or goals. At the start of the scenario, each robot $i\in I_R$ performs planning using a local copy $TS_i$ of $TS_w$ in which obstacle locations are not initially known, i.e., the labeling function $L_i(\cdot)$ does not initially return $\pi^b_j$ for any $s \in S_i$. As each robot $i$ learns the location of obstacles, the labeling function of $TS_i$ is updated.


Specifications for symbolic robot motion planning can be expressed as LTL formulae \cite{baier2008principles}. An LTL formula $\varphi$ is formed from APs, propositional logic operators, and temporal operators according to the grammar
\begin{equation}
\varphi::=\textrm{true} \mid \pi \mid \lnot\varphi \mid \varphi_1\land\varphi_2 \mid \bigcirc\varphi \mid \varphi_1\mathbin{U}\varphi_2,
\end{equation}
where $\pi$ is an AP, $\land$ is the propositional logic operator ``and,'' $\bigcirc$ is the temporal operator ``next,'' and $U$ is the temporal operator ``until.'' From these, the standard propositional operators $\lor$ ``or,'' $\rightarrow$ ``implies,'' etc. can be derived in addition to the temporal operators $\square$ ``always" and $\lozenge$ ``eventually." The formula $\square\pi$ is true for a trace if propositional formula $\pi$ is true in every state of the trace, $\lozenge\pi$ is true if $\pi$ is true in some state of the trace, and $\lozenge\square\pi$ is true if $\pi$ is true in some state of the trace and all states thereafter.
LTL can be used to specify expressive motion tasks such as reachability, (``reach $\pi$ eventually", i.e., $\lozenge\pi$), obstacle avoidance and safety (``always avoid $\pi$", i.e., $\square\neg\pi$), convergence and stability (``reach $\pi$ eventually and stay there for all future times", i.e., $\lozenge\square\pi$), and sequencing and temporal ordering of different tasks that can otherwise be difficult to specify in traditional path planning.


Given an LTL plan specification $\varphi$ and a finite labeled transition system $TS$, a high-level discrete plan that satisfies the specification can then be synthesized using a model checking approach. Traditionally, a model checker verifies whether a system $TS$ satisfies a specification $\varphi$, written $TS \models \varphi$. If not, it returns a counterexample trace $\sigma \in \textrm{traces}(TS)$ that does not satisfy the specification, i.e., $L(\sigma) \nvDash \varphi$. Model checking $TS$ against the negation of the specifications returns a trace that satisfies the specifications, since $L(\sigma) \nvDash \lnot \varphi \rightarrow L(\sigma) \models \varphi$. For symbolic robot motion planning,  this approach generally produces short paths relatively quickly \cite{humphrey2014formal}. Here, we generate plans using this approach with the NuSMV model checker~\cite{NuSMVlink}. For a team of robots $TS_i,~i=\{1,2,\cdots,N\}$ with each $TS_i$ as the discretized abstraction of the robot motion within the workspace, to satisfy a feasible global specification $\varphi$, we can then use this approach with the model checker NuSMV to generate motion plans for the multi-robot team. 
That is, the motion plan for each robot will
encode a single path $\sigma^i =s_0^i \overset{\alpha_0^i}{\rightarrow} s_1^i\overset{\alpha_1^i}{\rightarrow}s_2^i\ldots $ obtained by model checking, where $s_k^i \in S_i$ is the robot $i$'s state at step $k = 0,1,\ldots$ and pairs of sequential states $s_k^i\overset{\alpha_k^i}{\rightarrow} s_{k+1}^i\in \delta_i$ with $\alpha_k^i\in Act_i$ represent feasible transitions between states, i.e., direct transitions between states that are achievable in the continuous workspace. 

For the ISR scenario illustrated in Fig. \ref{fig:app1}, we are interested in specifications of the form
\begin{equation}
	\varphi=\underbrace{\bigwedge_{j\ \in Goals}\lozenge\pi^g_j}_{\text{Reachability}}\land \underbrace{\bigwedge_{i=1} ^{N}\bigwedge_{j \in Obs}\Box(\pi_{ij}^o\to\lnot\bigcirc\pi^b_j)}_\text{Obstacle~Avoidance}\wedge \underbrace{\bigwedge_{i=1}^{N}\Box(\pi_i^c\to\lnot\bigcirc\pi_i^u)}_\text{Robot~Collision~Avoidance}
	\label{eq:globalSpec}.
\end{equation}

The above LTL specification consists of a set  $\Pi$ of APs whose truth values are determined by the workspace that the human-robot team is operating in and observed by the robots' sensors. The value of each $\pi\in\Pi$ is a Boolean valuable.
The propositions of the form $\pi^g_j$ for $j \in Goals$ and $\pi^b_j$ for $j \in Obs$ label regions containing \textit{Goals} and \textit{Obstacles}. The term $\bigwedge_{j\ \in Goals} \lozenge \pi_j^g$ indicates the reachability specifications, i.e., all the goals will be eventually visited by a robot. Propositions of the form $\pi_{ij}^o$, $\pi_i^c$, and $\pi_i^u$ represent the observation, communication, and control APs. The term $\bigwedge_{i=1} ^{N}\bigwedge_{j \in Obs}\Box(\pi_{ij}^o\to \lnot \bigcirc\pi_j^b)$ indicates the obstacle avoidance specification, i.e., it is always true that once robot $i$'s sensor observes an obstacle $j$, the robot will not move to cell $w_j$ labeled with $\pi_j^b$ at the next step. The term $\bigwedge_{i=1}^{N}\Box(\pi_i^c\to \lnot\bigcirc \pi_i^u)$ indicates the inter-robot collision avoidance specification, i.e., it is always true that once robot $i$ finds another robot within its communication range, they will exchange goal, obstacle, and planned path information, and robot $i$ will not implement the LQR controller (\ref{eq:AP_control}) at the next step (but wait or replan). 
Note that the while the set $Goals$ is known by each robot at the start of the scenario, the set $Obs$ is not. Rather, each robot must update its transition system model over time as it discovers obstacles, replanning when necessary. Similarly, propositions of the form $\pi_{ij}^o$, $\pi_i^c$, and $\pi_i^u$ can only be evaluated during execution and are therefore only used to trigger the replanning process. This allows us to focus on distributed multi-robot symbolic motion planning for scalability. More details regarding the specification decomposition will be further explained in Section \ref{sec:trustBasedSpecificationDecomposition}.  

\section{Computational Trust Model}
\label{sec:trustModel}

Existing trust models include argument-based probabilistic models~\cite{cohen1998trust,moray1999laboratory}, qualitative models~\cite{moray1999laboratory,jian2000foundations}, time-series models~\cite{lee1992trust,moray2000adaptive,lee2004trust,gao2006extending}, neural net models~\cite{farrell2000connectionist}, regression models~\cite{de2003effects}, and the most recent Bayesian dynamic network based model called OPTIMo~\cite{xu2015optimo}. In particular, the time-series model characterizes the dynamic relationship between human trust in robots and the independent variables. Unlike most other types of models, the time-series model proposed by Lee and Morray is based on prior trust, robot performance, and joint human-robot fault~\cite{lee1992trust}. This model reflects the dynamic changes of trust and may be suitable for both real-time analysis and prediction of trust for control allocation. The OPTIMo trust model uses a dynamic Bayesian network (DBN) to infer beliefs over the human's latent trust states, or the degree of trust in near real-time, based on the history of observed interaction experiences. Although a performance-centric model, OPTIMo offers a probabilistic and dynamic framework to estimate and predict trust in complex tasks. There are also several computational trust models available~\cite{hall1996trusting,mikulski2013phd}. However, these models only characterize a robot's trustworthiness based on its task performance and do not consider the human influence. In this section, we integrate the time-series trust model and the OPTIMo trust model and develop a new mathematical model for estimating human trust in a robot, which is dynamic, quantitative, and probabilistic, and takes into account joint human-robot performance.

Since human behaviors are notoriously difficult to model, predict, and verify and the task environment is usually complex and uncertain, probabilistic analysis must be utilized to capture these uncertainties in trust estimates.
Our proposed integrated trust model is shown in Fig. \ref{fig:hmm}. First, let $T_i(k)$ denote human trust in robot $i,~i=1,2,\cdots,N$, which is a hidden random variable taking values from 0 to 1. This assumption is made because trust is difficult to measure directly in real-time and is usually measured subjectively after each experiment session. We use solid green ellipses to represent $T_i$ in the figure, indicating that this is a process evolving in real-time. The discrete trust state results in a standard DBN model. The actual realization and the sequence of the process, i.e., dynamic evolution of trust over time, is hidden. Based on our previous works involving creation of a time-series trust model~\cite{WaShZhWa-CPS-15,sadrspringer2015,SaSaWa-CASE-2016}, we identify three major factors impacting trust, i.e., robot performance $P_{R,i}$, human performance \YueE{$P_{H,i}$}, and joint human-robot system fault $F_i$. These factors are shown in solid yellow ellipses in the figure. Following the OPTIMo model, we also have the human inputs $m_i(k),c_i(k),f_i(k)$ represented by solid and dashed blue ellipses in the figure, with dashed ellipses indicating intermittent observations. This is because it might not be practical to have human inputs all the time. The term $m_i(k)\in\{0,1\}$ represents human intervention (i.e., switches between manual and autonomous modes) in motion planning, and its default value of zero indicates no  intervention. Hence, $m_i(k)$ can be measured and updated in real-time. The term $c_i\in\{-1,0,+1\}$ represents change in trust as reported by the human, with -1 indicating a decrease in trust, 0 indicating no change, and +1 indicating an increase in trust. The term $f_i\in(0,1)$ represents subjective trust feedback, which is a continuous value between 0 and 1. Both $c_i$ and $f_i$ only require occasional observations. That is, the participants will only be asked to provide trust change $c_i(k)$ and trust feedback $f_i(k)$ periodically. This ensures there is not much additional cognitive workload for the human operator during multi-robot cooperative tasks.  

\begin{figure}
	\centering
	\includegraphics[width=3.5in]{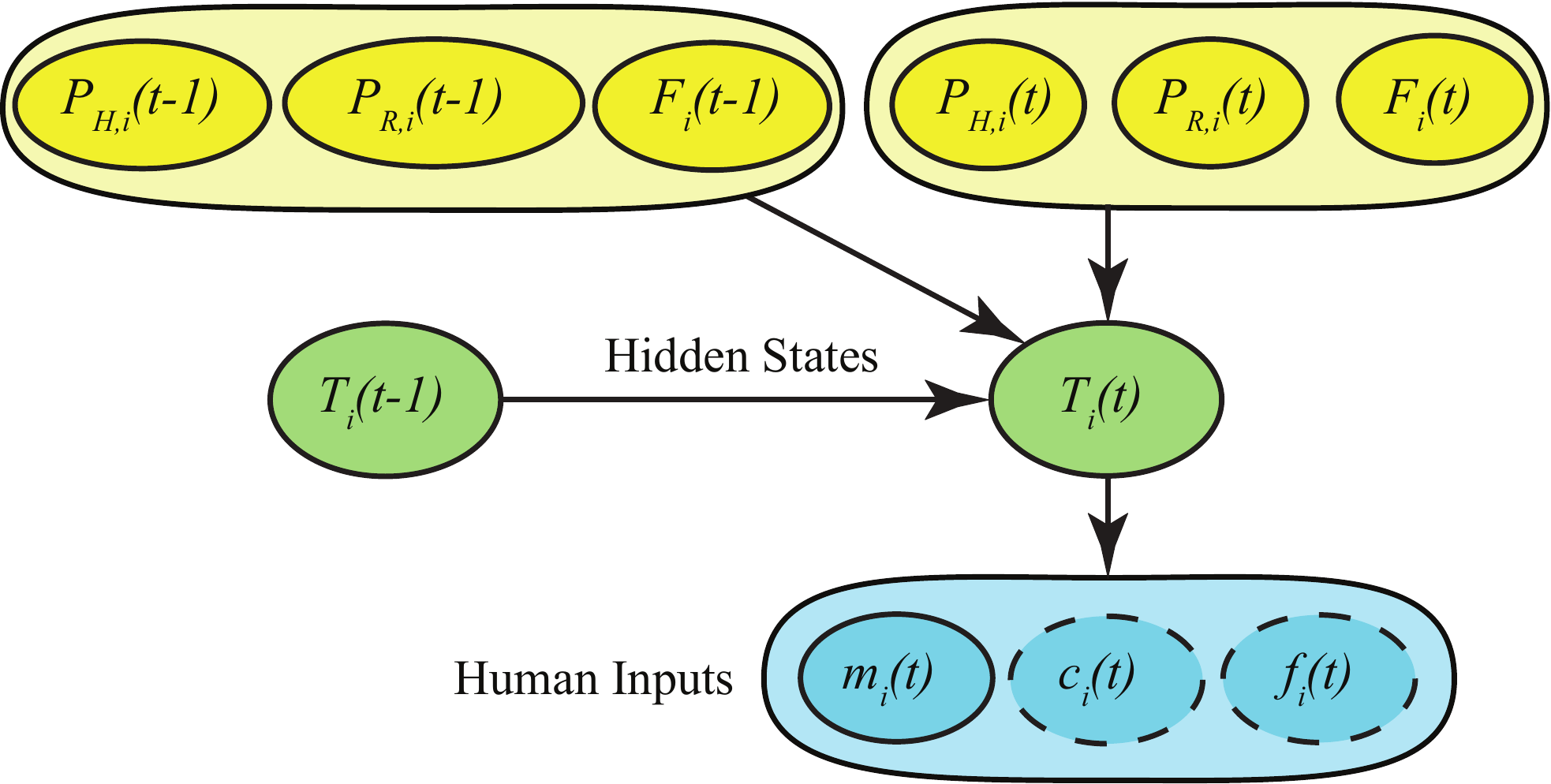}
	\caption{\small{A dynamic Bayesian network (DBN) based model for dynamic, quantitative, and probabilistic trust estimates.}}\label{fig:hmm}
\end{figure}

The conditional probability distribution (CPD) of human trust in robot $i$ at time $t$ based on the previous trust value $T_i(t-1)$ and the above causal factors can be expressed as a Guassian distribution with mean value $\bar{T}_i(t)$ and covariance $\sigma_i(t)$:
\begin{eqnarray}
\label{eq:guass_cpd}
&&\textrm{Prob}(T_i(t)|T_i(t-1), P_{R,i}(t), P_{R,i}(t-1),\YueE{P_{H,i}(t)},\YueE{P_{H,i}(t-1)}, F_i(t),F_i(t-1))\nn\\
&=&\mathcal{N}(T_i(t);\bar{T}_i(t),\sigma_i(t)),
\end{eqnarray}
\begin{eqnarray}
&&\hspace{-0.2in}\textrm{where~~}\bar{T}_i(t)=AT_i(t-1)+B_1P_{R,i}(t)-B_2P_{R,i}(t-1)+C_1\YueE{P_{H,i}(t)}-C_2\YueE{P_{H,i}(t-1)}\nn\\
\label{eq:trust}
&&\hspace{0.55in}+D_1F_i(t)-D_2F_i(t-1).
\end{eqnarray}
Here, $\bar{T}_i(t)\in(0,1)$ represents the mean value of human trust in a robot $i$ at time $t$, $P_{R,i}\in(0,1)$ represents performance of robot $i$, \YueE{$P_{H,i}\in(0,1)$} represents human performance, $F_i\in(0,1)$ represents faults made in the joint human-robot system and $\sigma_i$ reflects the variance in each individual's trust update. The coefficients $A,B_1,B_2,C_1,C_2, D_1, D_2$ can be determined by data collected from human subject tests (see the author's previous work~\cite{sadrspringer2015} for more details). Here, these coefficients are further scaled such that $\bar{T}_i$ is normalized. See Fig.~\ref{fig:fTrust} for an illustration of the dynamics of mean trust $\bar{T}_i$.

In this scenario, robot performance \YueE{$P_{R,i}$} is modeled as a function of ``rewards'' the robot receives when it identifies an obstacle or reaches a goal destination:
\begin{equation}
\YueE{P_{R,i}(t)} = C_O \frac{N_{O_i}(t)}{\sum_{i=1}^N N_{O_i}(t)} + C_G \frac{N_{G_i}(t)}{\sum_{i=1}^N N_{G_i}(t)}
\nonumber\end{equation}
where $N_{O_i}$ and $N_{G_i}$ are the number of obstacles detected and goals reached by the robot $i$ up to time $t$, and $C_O\in(0,1)$ and $C_G=1-C_O\in(0,1)$ are corresponding positive rewards chosen such that \YueE{$P_{R,i}(k)$} is normalized. This allows the robot to earn trust as it learns details of the environment. 

Human performance is calculated based on workload and the complexity of the environment surrounding the robot with which the human is currently collaborating. The concept of utilization ratio, $\gamma$, is used to measure workload~\cite{spencer2015slqr}
\begin{eqnarray}
\gamma(t) &=& \gamma(t-1)+\frac{\sum_{i=1}^{N}m_i(t)-\gamma(t-1)}{\tau} 
\label{eq:utilizationRatio}
\end{eqnarray}
where $m_i(t) = 1$ if the human is collaborating with robot $i$ and $0$ otherwise, and $\tau$ can be thought of as the sensitivity of the operator.
Assuming a human can only collaborate with one robot at a time, i.e., manually assigning paths through obstacles for the chosen robot, (\ref{eq:utilizationRatio}) allows workload to grow or decay between 0 and 1. Complexity of the environment is based on the number of obstacles that lie within sensing range $r_i$ of collaborating robot $i$ at time $t$. 
The human's superior capability in creating more detailed paths will be enhanced in more complex environments, leading to increased performance in the presence of more obstacles. On the other hand, human performance decreases with respect to workload. Therefore, \YueE{$P_{H,i}(t)$} can be modeled as follows:
\begin{equation}
\YueE{P_{H,i}(t)}=  \left\{\begin{array}{l l}
1-\gamma(t)^{S_{o_i}(t) + 1} & \textrm{if~}m_i(t)=1 \\
1-\gamma(t) & \textrm{if~}m_i(t)=0
\end{array}\right.
\label{eq:humanPerformance}
\end{equation}
where $S_{o_i}$ is the number of obstacles within sensing range of collaborating robot $i$, reflecting the environmental complexity.
Fig. \ref{fig:PHPlot} shows the change of human performance with respect to workload $\gamma$ 
and environmental complexity $S_{o_i}$.

\begin{figure}
	\centering
	\includegraphics[width = .5\columnwidth]{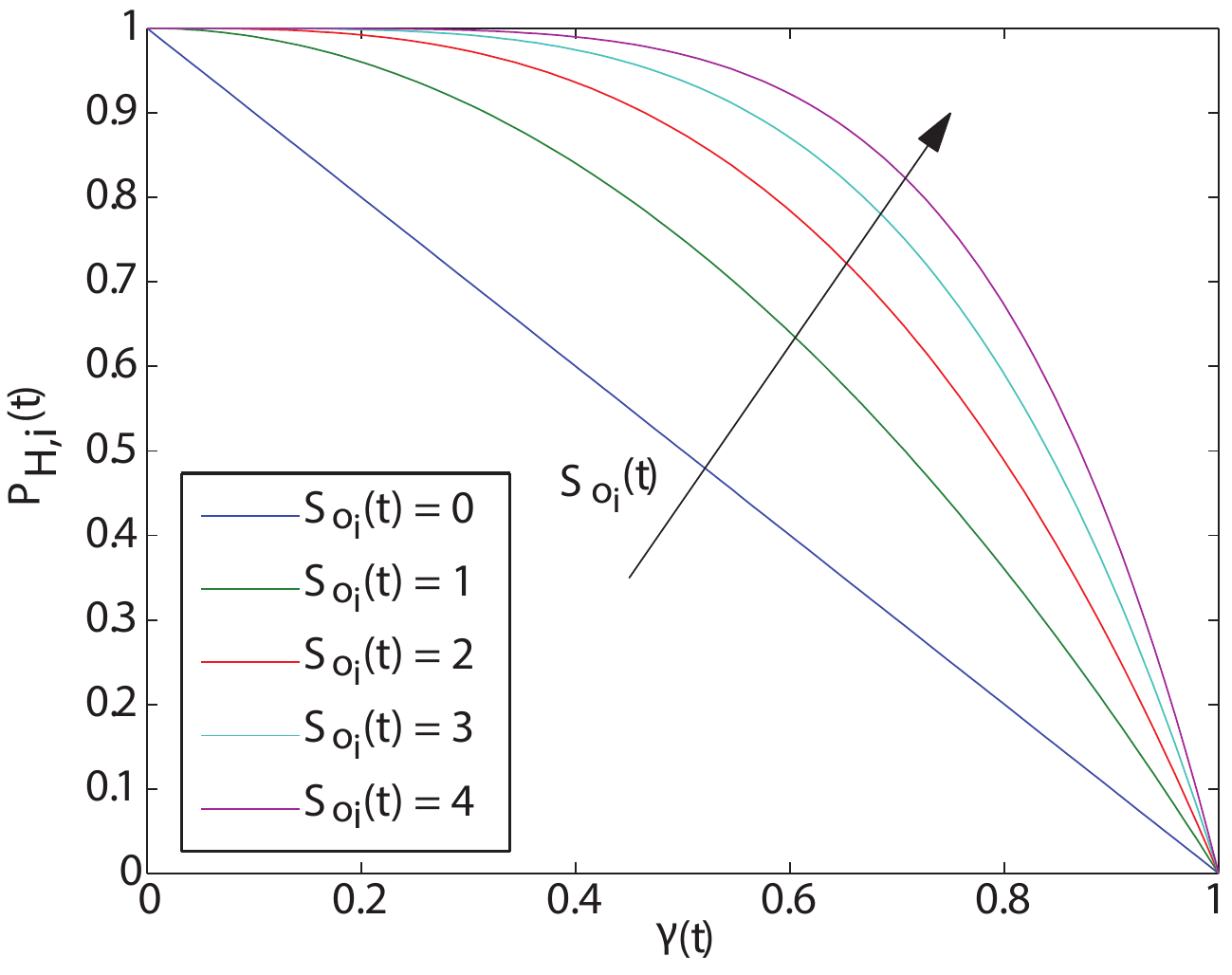}
	\caption{Plot of human performance when collaborating with a robot $i$.}
	\label{fig:PHPlot}
\end{figure}

Faults in the system are modeled as the ``penalty" the robot receives when it enters an obstacle region or detects an obstacle on its planned path: $F_i(t)=-\frac{N_{H_i}(t)}{N_{O_i}(t)}$
where $N_{H_i}(t)$ is the total number of obstacle regions robot $i$ has entered before sensing the corresponding obstacle up to time $t$. Note that faults can originate from both the robot and the human, i.e., human trust in a robot will decrease even if the robot enters an obstacle region under manual motion planning. 

Based on the DBN, we can quickly establish the belief update of trust, i.e., $bel(T_i(t))=\textrm{Prob}(T_i(t)|P_{R,i}(1:t), \YueE{P_{H,i(1:t)}},F_i(1:t),m_i(1:t),c_i(1:t),f_i(1:t),T_i(0))$,
using the forward algorithm by applying the principle of dynamic programming to avoid incurring exponential computation time due to the increase of $t$. We can first compute 
\begin{eqnarray}
&&\overline{bel}(T_i(t),T_i(t-1))\nonumber\\
&=&\textrm{Prob}(m_i(t)|T_i(t), T_i(t-1)) \textrm{Prob}(c_i(t)|T_i(t), T_i(t-1))
\textrm{Prob}(f_i(t)|T_i(t))\cdot\nn\\
&&\textrm{Prob}(T_i(t)|T_i(t-1), P_{R,i}(t), P_{R,i}(t-1),\YueE{P_{H,i}(t),P_{H,i}(t-1),} F_i(t),F_i(t-1))bel(T_i(t-1)),\nonumber
\end{eqnarray} 
where $\textrm{Prob}(m_i(t)|T_i(t), T_i(t-1))$ is the probability of human intervention, $\textrm{Prob}(c_i(t)|T_i(t),T_i(t-1))$ is the probability of a trust change given current and prior trust, and $\textrm{Prob}(f_i(t)|T_i(t))$ is the probability of subjective trust evaulation, respectively, which can follow a similar sigmoid distribution as in~\cite{xu2015optimo}. For example, the CPD of human intervention based on trust can be modeled as follows
\begin{eqnarray}\label{eq:omega}
\textrm{Prob}(m_i(t)=1|T_i(t), T_i(t-1))=
(1+exp(-(\omega_1 T_i(t)-\omega_2 T_i(t-1))))^{-1}
\end{eqnarray}
where $\omega_1$ and $\omega_2$ are positive weights and this CPD indicates higher willingness to collaborate with a robot (intervention in path planning) when the human trust is higher. 
It follows that
\begin{eqnarray}\label{eq:BayesUpdates}
&&
bel(T_i(t))=\frac{\int\overline{bel}(T_i(t),T_i(t-1))\textrm{d} T_i(t-1)}{\int \int \overline{bel}(T_i(t),T_i(t-1))\textrm{d} T_i(t-1)\textrm{d} T_i(t)}.
\end{eqnarray} 
The network parameters for the DBN such as  {$\omega_1$ and $\omega_2$ in Equation (\ref{eq:omega})} can be learned by the well-known expectation maximization (EM) algorithm~\cite{moon1996expectation}. {For the implementation of the EM algorithm, we summarize the steps as follows:
\begin{itemize}
\item Expectation steps: 
\begin{enumerate}
\item Calculate the filtered trust belief $bel(T_i (t))$ at the end of the training simulation $t=t_f$ forward in time assuming a uniform initial trust belief; 
\item Calculate the smooth trust belief
$bel_s (T_i (t))=Prob(T_i (t)|P_{R,i} (1:t_f ),\YueE{P_{H,i}(1:t_f)},F_i (1:t_f ),m_i (1:t_f ),c_i (1:t_f ),f_i (1:t_f ),T_i (0))$
for all data backward in time given that $bel_s (T_i (t_f))=bel(T_i (t_f))$; 
\item Take the expectation for each smooth trust belief to get a single sequence of trust states.
\end{enumerate}
\item Maximization step: Use this calculated sequence of trust states at the Expectation steps, along with other performance $P_{R,i}, \YueE{P_{H,i}}, F_i$ and human inputs $m_i,c_i,f_i$ to find the optimized parameters for each CPD separately.
\end{itemize}
Note that this parameter learning process of the DBN is performed offline during the training session and hence will not affect the functionality of the system and the user experience during the real-time operation.
} A separate trust model should be trained based on each user's experience \YueE{using their respective trust input data $(m_i, c_i, f_i)$ as well as the performance measures $(\YueE{P_{H,i}}, P_{R,i}, F_i)$. }

\section{Trust-Based Specification Decomposition}
\label{sec:trustBasedSpecificationDecomposition}


Available methods for multi-robot symbolic motion planning have mainly focused on fully autonomous systems and can be summarized into two main types: centralized and decentralized solution approaches. Centralized solutions treat the robot team as a whole and have a large global state space formed by taking the product of the state spaces of all the robots~\cite{kloetzer2010icra,kloetzer2010multirobot}, which is too large to handle in practice. Decentralized solutions tend to give local specifications to individual robots, which results in a smaller state space but often sacrifices guarantees on global performance~\cite{filippidis2012cdc}. Here, we propose a distributed solution for human-robot symbolic motion planning. 
Ideally, if all the obstacles are known beforehand and the speed profile of each robot is given, optimal and collision-free paths can be designed offline without the necessity of collaboration. In this work, we consider a gradually learned environment with intermittent communications when robots are within the communication range, which therefore requires collaboration between robots for obstacle and collision avoidance. We first present a method for addressing the collision avoidance task in Section \ref{sec:commObsControl} and then a method for decomposing the specification for individual tasks in Section \ref{sec:compositionalReasoining}. In Section \ref{sec:liveness}, deadlock- and livelock-free
algorithms are developed to guarantee that all goals are reached while all obstacles and robot collisions are
avoided with a human-in-the-loop.

\subsection{Specification Updates Based on Atomic Propositions for Observation, Communication, and Control}
\label{sec:commObsControl}

Here we introduce the atomic propositions $\pi_i^o$, $\pi_{ij}^c$, and $\pi_i^u$ in (\ref{eq:globalSpec}) for each robot $i$, which correspond to observation, communication, and control. All these atomic propositions (APs) are dynamically checked and updated based on the robots' sensing. A similar approach has been used in decentralized multi-robot tasking in~\cite{filippidis2012cdc}, and here we extend it to distributed multi-robot systems that must meet a global specification. The observation proposition $\pi_{ij}^o$ for robot $i$ is true if an obstacle $j$ in workspace
cell $w_j$ is within its sensing range $r_i$ and false otherwise:
\begin{eqnarray}\label{eq:AP_comm}
\pi_{ij}^o(t)=\left\{
\begin{array}{l l}
\|x_i(t)-x_o^j\|\le r_i & \quad \textrm{true}\\
\|x_i(t)-x_o^j\|> r_i & \quad \textrm{false}
\end{array} \right.,~j\in{Obs}, \nonumber
\end{eqnarray}
where $x_o^j$ represents the actual position of an obstacle $j$.
When $\pi_{ij}^o$ is true, robot $i$ senses an obstacle $j$ along its planned path.
The robot then updates its model of the workspace by modifying its transition system representation $TS_i$ to label the state representing workspace cell $w_j$ with proposition $\pi^b_j$. The robot then resynthesizes its plan using the updated version of $TS_i$ according to the procedure described in Section \ref{subsec:symbolicMotionPlanning}, which then guarantees the specification $\Box(\pi_{ij}^o\to \lnot \bigcirc\pi^b_j)$.

The communication proposition $\pi_{i}^c$ for robot $i$ is true if another robot $j$ is within its communication range $\rho_i$ and false otherwise:
\begin{eqnarray}\label{eq:AP_comm}
\pi_{i}^c(t)=\left\{
\begin{array}{l l}
\|x_i(k)-x_j(t)\|\le\rho_i & \quad \textrm{true}\\
\|x_i(k)-x_j(t)\|>\rho_i & \quad \textrm{false}
\end{array} \right.,~j\ne i,~=1,2,\cdots, N. \nonumber
\end{eqnarray}
When $\pi_{i}^c$ is true, robots $i$ and $j$ can communicate with each other to exchange goal assignment, obstacle, and planned path information, allowing them to learn features of the environment they have not yet explored themselves and resynthesize their plans to avoid obstacles if necessary. This information can also be used to detect possible collisions between the two robots. 

We next introduce the control proposition. When $\pi_i^u$ is true, robot $i$ is executing the nominal LQR control law~(\ref{eq:AP_control}); when false, the robot pauses or replans its path autonomously:
\begin{eqnarray}
&&\hspace{-0.15in}u_i(t)=\left\{
\begin{array}{l l}
\textrm{LQR} & \quad \pi_i^u(t)\\
\textrm{wait or replan} & \quad \neg\pi_i^u(t)
\end{array} \right..
\nn \end{eqnarray}
When the communication proposition $\pi_{i}^c$ is true, robot $i$ has detected a potential collision
and communicates its path with involved robot $j$.
At this moment, the control proposition $\pi_i^u$ is set to false, and the robot takes one of two actions dependent upon the collision type to guarantee the specification $\Box(\pi_{i}^c\to \lnot \bigcirc\pi_i^u)$. Both waiting and replanning can be accomplished autonomously. 

Fig. \ref{fig:collision} illustrates  examples for possible collision scenarios. The robots' communication ranges are represented using dashed circles and their positions by solid circles. The arrows represent the path information communicated among the robots. We outline the collision avoidance method using the APs for communication and control (i.e., $\bigwedge_{i=1} ^{N}\Box(\pi_i^c\to\lnot\bigcirc\pi_i^u)$) as follows. 
\begin{assumption}\label{assump:smallerID}
Under the collision avoidance scenarios, without loss of generality, we assume that a robot with 
the smaller index $i \in I_R$
will always have higher priority in replanning and stop first if necessary. Similar prioritization policies can be imposed and the rest of the method will not be affected. 
\end{assumption}
Fig. \ref{fig:collision}(a) demonstrates the collision scenario when two robots will enter into a same cell at the next step. In this case, Robot 1 will stop and wait for Robot 2's next move according to our prioritization policy. Fig. \ref{fig:collision}(b) shows the scenario when two robots are at adjacent cells and will run into each other at the next step. In this case, both robots will replan to avoid collision by taking the other robot as an obstacle. Fig. \ref{fig:collision}(c) illustrates the scenario when three robots will run into a same cell at the next step. In this case, Robot 1 will stop first and Robot 2 and 3 will either stop or replan based on their collision type (i.e., the scenario shown in either Fig. \ref{fig:collision}(a) or  \ref{fig:collision}(b)). 
Note that this simple protocol is not guaranteed to be deadlock-free in arbitrary environments. For instance, if two robots must pass by each other through a 1-cell wide ``corridor'' of cells between two sets of obstacles and there is no other path to their goals, they will not be able to reach them unless they trade goals, leading to deadlock. Even if trading goals between robots is enabled, then over the course of multiple collision avoidance replanning events, robots can potentially trade and re-trade goals in such a way that some set of the goals are not ever reached, leading to livelock. In Section \ref{sec:liveness}, we present a more sophisticated protocol that is guaranteed to be both deadlock- and livelock-free.

\begin{figure}[thpb]
	\center
	\includegraphics[width = 5.5in]{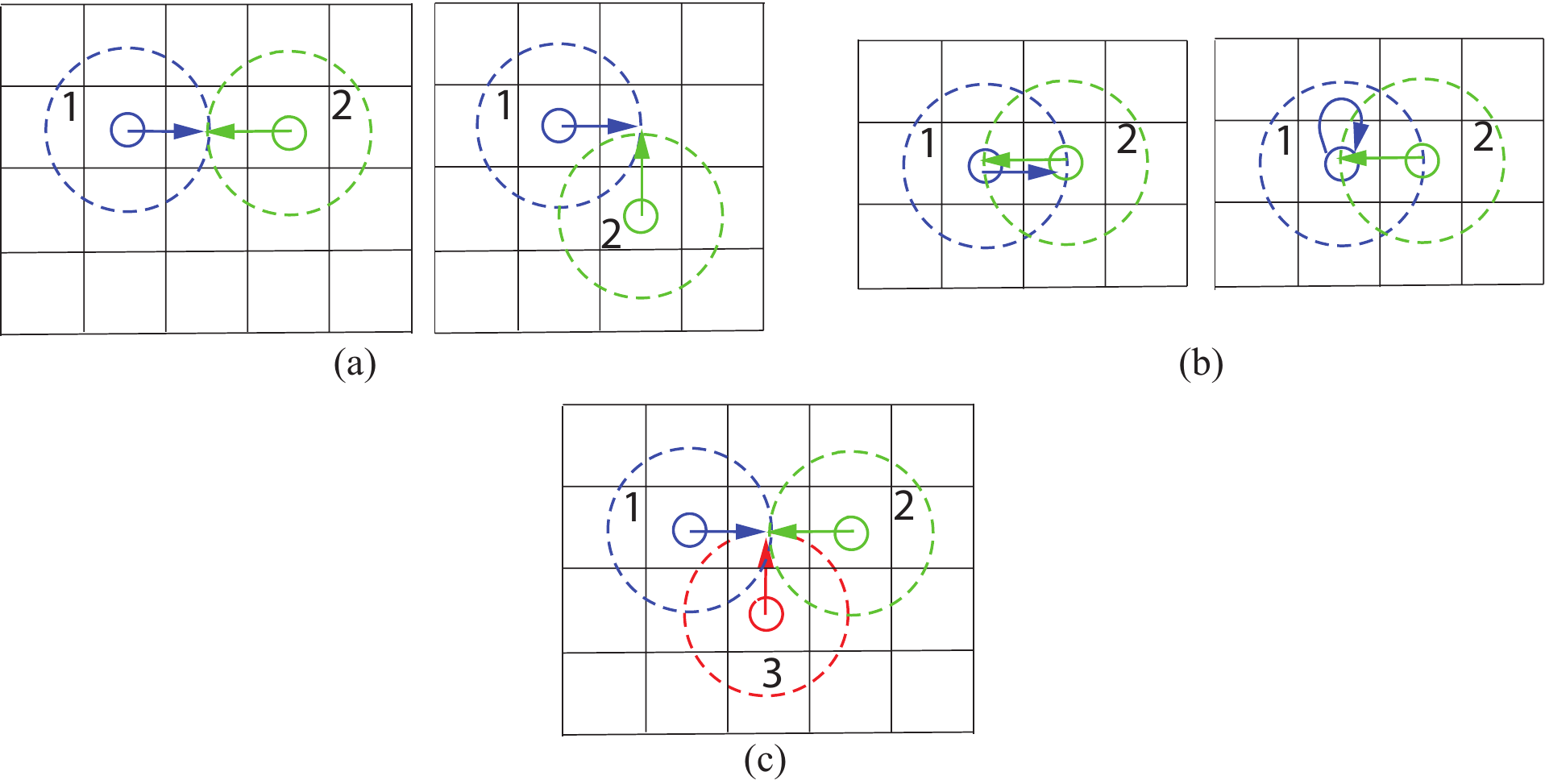}
	\caption{Collision avoidance scenarios.}
	\label{fig:collision}
\end{figure}


\subsection{Compositional Reasoning for Reachability and Obstacle Avoidance}
\label{sec:compositionalReasoining}

A general compositional reasoning approach can be used to show that the robots are able to collectively fulfill the reachability and obstacle avoidance portions of the global specification using a distributed planning approach. Note that this reasoning neglects the possibility of inter-robot collisions and switching to human control, which are addressed separately in Sections \ref{sec:commObsControl}, and \ref{sec:realTimeTrustBasedSwitching}, and an overall solution will be provided in Section \ref{sec:liveness}. Compositional reasoning in this context relies on concepts of interleaving of transition systems and unconditional fairness, which are defined as follows.

\begin{definition}[Interleaving of Transition Systems]
	\label{def:interleaving}
	The interleaving of two transition systems $TS_i= (S_i, Act_i,\delta_i, I_i, \Pi_i, L_i)$,  $i=1, 2$ is defined as
	$TS_1 \compose TS_2 = (S_1 \times S_2, Act_1 \cup Act_2, \delta, I_1 \times I_2, \Pi_1 \cup \Pi_2, L)$, 
	where transition relation $\delta$ is $(s_1,s_2) \rightarrow (s_1', s_2)$ if $s_1 \trans{\alpha} s_1'$ for $\alpha \in Act_1$ and 
	$(s_1,s_2) \rightarrow (s_1, s_2')$ if $s_2 \trans{\alpha} s_2'$ for $\alpha \in Act_2$, and labeling function $L$ is $L(s_1,s_2) = L(s_1) \cup L(s_2)$.
\end{definition}

\begin{definition}[Unconditional Fairness]
	A path $\sigma = s_0 \overset{\alpha_0}\rightarrow s_1 \overset{\alpha_1}{\rightarrow} s_2 \overset{\alpha_2}{\rightarrow} \ldots$ of a transition system $TS$ is unconditionally fair with respect to a set of actions $A \subseteq 2^{Act}$ if it contains infinitely many instances of each action $\alpha \in A$. 
	A transition system $TS$ satisfies $\varphi$ under an unconditional fairness assumption $\mathcal{F} \subseteq 2^{Act}$, denoted $TS \models_{\mathcal{F}} \varphi$, if all paths in $TS$ that are unconditionally fair with respect to $\mathcal{F}$ satisfy $\varphi$.
	\label{def:fairness}
	
\end{definition}

Recall that, neglecting inter-robot collisions and switching to manual motion planning, each robot $i$ will eventually complete execution of a specific path $\sigma^i$ that reaches its assigned goals while avoiding all obstacles according to the procedure and reasoning given in Section \ref{subsec:symbolicMotionPlanning}. Note that this can be seen as a special case of a transition system comprising only one path. Let us then represent this path as transition system $TS_{\sigma}^i$ and note that $TS_{\sigma}^i \models  \bigwedge_{j\ \in Goals_i} \lozenge \pi_j^g \land  \bigwedge_{i=1} ^{N}\bigwedge_{j \in Obs}\Box(\pi_{ij}^o\to \lnot \bigcirc\pi_j^b)$.
Assuming each robot can take an infinite number of actions over time -- a natural assumption since the robots act independently -- we have the following lemma.
\begin{lemma}
	Assuming there exists at least one path to each assigned goal, there are no inter-robot collisions, and there is no switching to manual motion planning, the robots are guaranteed to eventually reach all assigned goals while avoiding collisions with obstacles under an assumption of unconditional fairness. 
	\label{lemma:indvSpec}
\end{lemma}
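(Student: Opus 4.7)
The plan is to reduce the lemma to a standard fact about compositional reasoning for interleaved transition systems under unconditional fairness. Concretely, I would form the interleaved system
\[
TS_\sigma \;=\; TS_\sigma^1 \compose TS_\sigma^2 \compose \cdots \compose TS_\sigma^N
\]
using Definition~\ref{def:interleaving} applied repeatedly (associativity is immediate from the definition), and take the fairness assumption to be $\mathcal{F} = \{Act_1,\ldots,Act_N\}$, i.e.\ each robot's action set must be scheduled infinitely often. The ``no collisions, no manual switching'' hypotheses of the lemma are precisely what allow the pure interleaving semantics of $\compose$ to model the joint execution; otherwise the transition relation would be constrained by the wait/replan rules of Section~\ref{sec:commObsControl}.

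First I would argue the reachability clause. Because each $\sigma^i$ is a finite witness path satisfying $\bigwedge_{j\in Goals_i}\lozenge\pi^g_j$, it is of bounded length $n_i$. Under $\mathcal{F}$, any path of $TS_\sigma$ schedules each $Act_i$ infinitely often and therefore performs at least $n_i$ actions from robot $i$, which drives $TS_\sigma^i$ through its entire trace. Hence every $\pi^g_j$ with $j\in Goals_i$ occurs somewhere along the interleaved trace. Taking the conjunction over $i=1,\ldots,N$ and using the decomposition hypothesis $\bigcup_i Goals_i = Goals$, we obtain $TS_\sigma \models_{\mathcal{F}} \bigwedge_{j\in Goals}\lozenge\pi^g_j$.

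Second I would argue the obstacle avoidance clause. The atomic propositions $\pi^o_{ij}$ and $\pi^b_j$ that appear in $\Box(\pi^o_{ij} \to \lnot\bigcirc\pi^b_j)$ depend only on robot $i$'s position (sensing range in the first case, occupancy in the second), so they are determined entirely by the projection of the interleaved state onto the $i$-th component. By Definition~\ref{def:interleaving}, consecutive steps that modify component $i$ in the interleaved path correspond to consecutive steps in $TS_\sigma^i$; steps belonging to other components leave robot $i$'s state and its labels $\pi^o_{ij},\pi^b_j$ unchanged. Therefore the safety formula holds at every step of the interleaved trace iff it holds at every step of the projected trace, and the latter holds by construction of $\sigma^i$. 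Conjoining over $i$ and $j\in Obs$ gives the obstacle-avoidance part of (\ref{eq:globalSpec}).

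The main obstacle is the passage from component-wise satisfaction to global satisfaction for the \emph{liveness} part: an unfair schedule could starve some robot $i$ and leave its goals unreached, which is exactly why the fairness assumption $\mathcal{F}$ cannot be dropped. Making the ``fairness $\Rightarrow$ each component fully executes'' step explicit is the technical heart of the proof; the safety part, by contrast, is a routine projection argument that requires no fairness. A minor bookkeeping point is to verify that the labeling in the interleaving, $L(s_1,\ldots,s_N)=\bigcup_i L_i(s_i)$, correctly exposes both local propositions $\pi^g_j,\pi^b_j,\pi^o_{ij}$ to the global formula, which is immediate since these propositions are already partitioned by robot index.
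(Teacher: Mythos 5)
Your proposal is correct and follows essentially the same route as the paper's proof: form the interleaving $TS_\sigma^1 \compose \cdots \compose TS_\sigma^N$, use unconditional fairness over all robots' action sets to guarantee each finite witness path $\sigma^i$ is fully executed (giving $\lozenge\pi^g_j$ for every $j\in\bigcup_i Goals_i = Goals$), and observe that the component-wise safety properties are preserved under interleaving. You spell out two steps the paper leaves implicit --- that fairness forces each bounded-length $\sigma^i$ to complete, and that the obstacle-avoidance clause survives interleaving by a projection argument --- but the decomposition and key ideas are identical.
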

\begin{proof} 
The combination of all robots acting in the workspace together is $TS_{R_W} = TS_{\sigma}^1 \compose \cdots \compose TS_{\sigma}^N$. 
	Assuming the robots act independently, each robot is able to take an infinite number of actions, 
	and all actual execution paths of $TS_{R_W}$ will be unconditionally fair with respect to $\mathcal{F} = {\cup_{i=1}^N Act_i}$.
	
	Note that every path in $TS_{R_W}$ is formed by interleaving the individual paths represented by each $TS_{\sigma}^i$  where $\sigma_i \models \bigwedge_{j \in Goals_i} \lozenge \pi_j^g$. Then $TS_{R_W} \models_{\mathcal{F}} \bigwedge_{j \in Goals} \lozenge \pi_j^g$, since all actions $\alpha^i \in Act_i$ that enable each $TS_{\sigma}^i$ to reach states satisfying each goal $\pi_j^g \in Goals_i$ will be executed, and $\bigcup_{i \in I_R} Goals_i = Goals$. Furthermore, since all robots satisfy $TS_{\sigma}^i \models \bigwedge_{j \in Obs} \Box (\pi_i^c\to\lnot\bigcirc\pi_j^b)$, then $TS_{R_W} \models \bigwedge_{i=1}^{N}\bigwedge_{j \in Obs} \Box (\pi_i^c\to\lnot\bigcirc\pi_j^b)$, and there are no states for any $TS_{\sigma}^i$ in which $\pi_j^b$ for $j \in Obs$ holds. 
\end{proof}

\begin{remark}
	The trust belief estimate based on Eq. (\ref{eq:BayesUpdates}) will determine the specification decomposition, with more trusted robots assigned more destinations. Since trust is dynamically involving, this robot assignment will be updated on the fly. $\hfill\bullet$
\end{remark}

\section{
	Real-Time Trust-Based Switching Between Manual and Autonomous Motion Planning}
\label{sec:realTimeTrustBasedSwitching}

In this section, we utilize trust analysis in a real-time switching framework to enable switches between manual and autonomous motion planning. Recall from Section \ref{sec:humanRobotInteraction} that although the autonomous motion planning is guaranteed to be correct, it is usually conservative due to overapproximation of the environment. So while the autonomous motion planning is safe, more efficient but riskier paths -- in this case, paths between obstacles in adjacent regions -- may exist. If the human trusts a robot's ability in navigating between two obstacles, the human can choose to construct a more efficient path between the obstacles based on, e.g., sensory information about the obstacles supplied by the robot. 

\begin{remark}
Under the autonomous motion planning mode, to guarantee the bisimulation between high-level discrete motion planning and low-level continuous control, we utilize a nominal LQR controller (\ref{eq:AP_control}), and hence a robot moves between centroids of adjacent cells. That is to say, the cells are assumed to be four-cornered and a robot can only move ``left", ``right", ``up", and ``down" at the next step. However, under the manual motion planning mode, there is no such restriction and a robot can move in any direction following the human assigned waypoints. $\hfill\bullet$
\end{remark}

Fig. \ref{fig:trustPaths} shows an example of how trust can benefit the motion planning of an autonomous robot, generated by integrating the NuSMV model checker with Matlab. The environment of the system is represented in Fig. \ref{fig:trustPaths}. The robot begins in cell 5, represented by the circle, and must reach the goal in cell 15, represented by the diamond.
The obstacles are represented in the continuous space by filled polygons 
and in the discrete space by cells marked with ``X''s.
As Fig. \ref{fig:trustPaths}(a) shows, the autonomous motion planner avoids regions marked by Xs and generates a safe but lengthy path to the goal;
while there is a gap between the two obstacles through cell 10, the overapproximation prevents the robot from generating a path through this region. If the human operator trusts the robot's ability to follow a path between the obstacles and is not overloaded,
the manual motion planning mode will be activated and the benefits can clearly be seen in Fig. \ref{fig:trustPaths}(b). This tradeoff between safety and efficiency motivates the use of trust-based switching between manual and autonomous motion planning. 

\begin{figure}[h]
	\centering
	\includegraphics[width=3.5in]{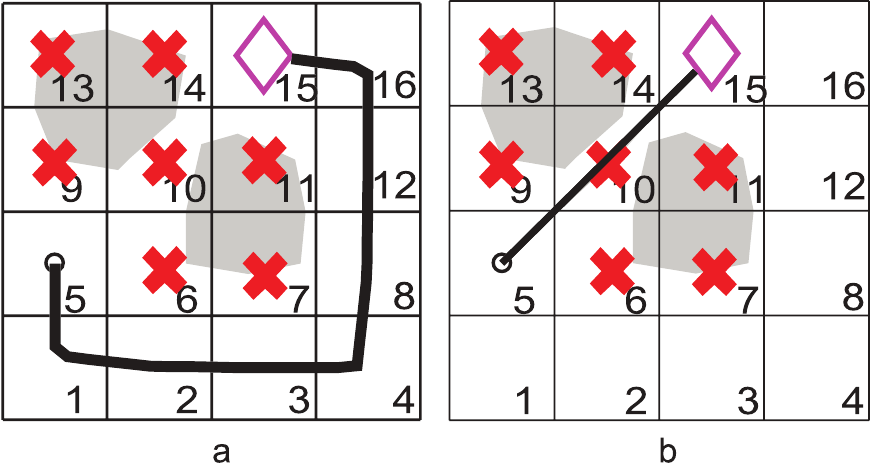}
	\caption{(a) Safe autonomous motion planning in low trust scenario, and (b) advanced manual motion planning in high trust scenario.}
	\label{fig:trustPaths}
\end{figure}

This high trust path, however, is inherently risky since information regarding the environment in a typical scenario is limited to what the robot has sensed and provided to the human through visual feedback.  
It is therefore necessary to develop a monitoring and switching framework such that the autonomous safe motion planning can be reactivated if any new obstacle in path is detected and a collision is about to happen under the path planned by the human. The overall architecture of our proposed framework is based on two frameworks which were originally designed for augmenting system safety. The first framework is called the Simplex architecture, which guarantees application-level safety by ``using simplicity to control complexity''~\cite{sha2001using,bak2009system}, and the second one is called the Monitoring and Checking (MaC) framework, which ``bridges the gap between formal specification and verification" and was developed originally to assure the correctness of program execution at runtime~\cite{kim1998framework}. Fig.~\ref{fig:trustDiagram} shows the schematic of the proposed monitoring and trust-based switching framework.
This framework is logically divided into five subsystems: Motion Planner, Controller, Monitor, Checker, and Decision Maker. This system is designed to be able to control the robot in two modes: Manual versus Autonomous. The advanced sub-system, i.e., manual mode, is less safe since it allows the human to plan riskier paths. The baseline subsystem refers to the autonomous mode and uses the symbolic motion planner, which is guaranteed to be correct and hence safe. In the Monitor subsystem, we have two modules: Filter and Event Recognizer. The filter is designed to extract the motion state information of a robot and send it to the event recognizer. The event recognizer detects an event from the values received from the filter based on event definitions provided by a monitoring script. The monitoring script maps the system states to the events at the requirement level to enable analysis by the system checker. Here, events are defined according to the condition $\|x_i - x_o^j\| \leq r_o$, where $x_o^j$ is the obstacle position and $r_o$ is some minimally acceptable distance between the robot and the obstacle. Should the robot come within this distance, an event will be detected. According to the mean trust equation (\ref{eq:trust}), this event detection also leads to fault penalty and hence lowers trust in the robot, leading to a re-evaluation of the assigned tasks. The result is that other more trusted robots may be re-assigned some of the destinations that were originally assigned to the
robot that generated the fault.
Once this re-evaluation is performed, the operator is free to continue working with the same or another robot depending on the change of levels of trust. Once the event recognizer detects an event, it will send the information to the checker module. The checker checks whether or not the current execution of the system meets the specification. Based on the information received from the checker, the decision module determines under which mode the system should run for motion planning. More specifically, the trust-based decision module first computes the trust belief distribution based on Equation (\ref{eq:BayesUpdates}) for each robot $i$, then the corresponding trust value that yields the maximum likelihood is obtained. Next, the current maximum likelihood trust is compared with the maximum likelihood trust at the previous time step and the change of trust value can be calculated. The decision module will suggest that the human collaborate with the robot that has the highest trust increase beyond a certain threshold. In case multiple robots have the same highest trust change, some priority criterion can be used to choose an individual robot. See Fig. \ref{fig:GUI} for an example of the GUI design used in our simulation for a robot requesting manual motion planning based on trust comparison.

\begin{figure}[ht]
\centering
\resizebox{4.0in}{!}{\includegraphics{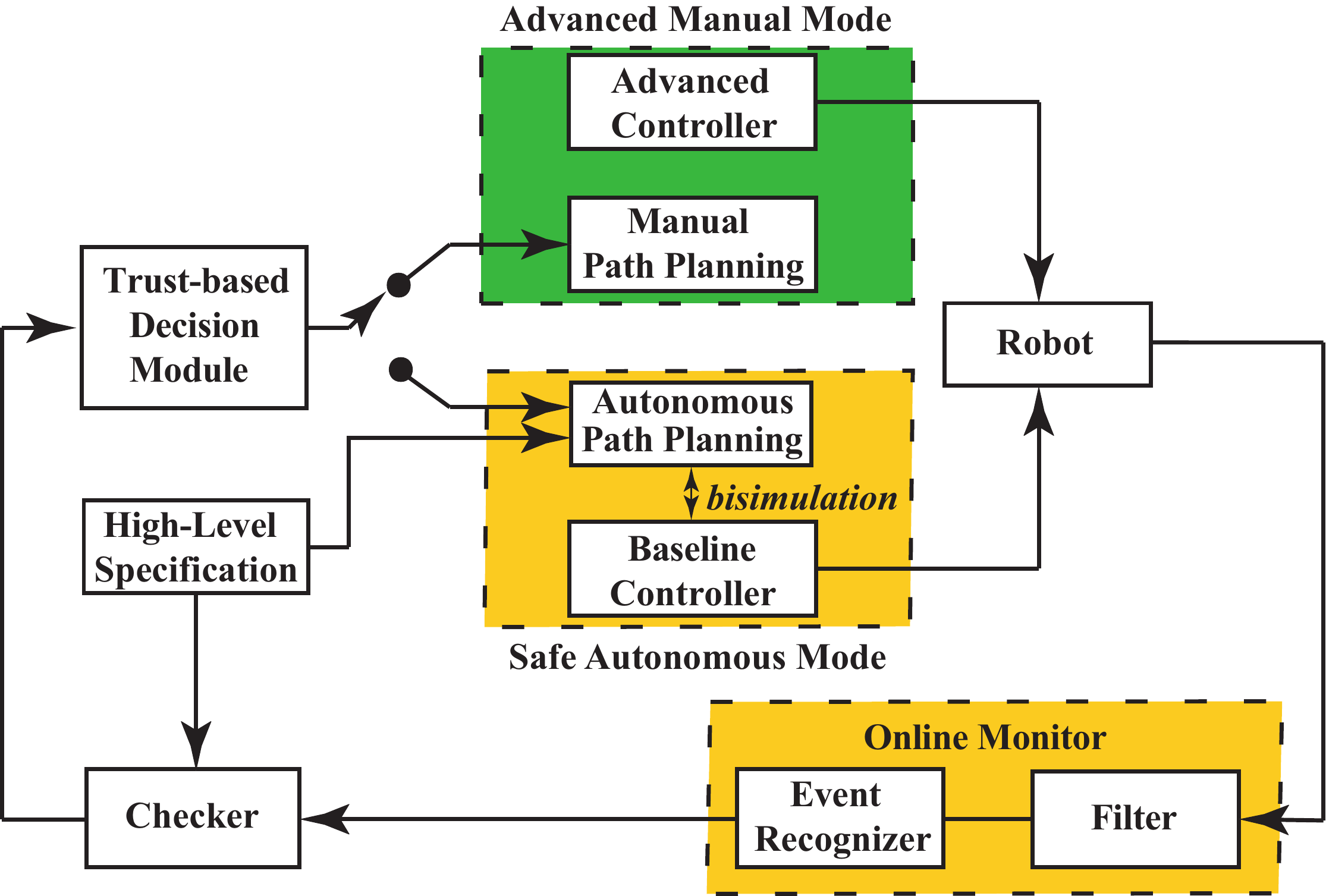}}
\caption{The overall structure of the monitoring and trust-based switching framework}\label{fig:trustDiagram}
\end{figure}



\begin{remark}
	From the proof of Lemma \ref{lemma:indvSpec} it is shown that given no collision among robots, each robot will generate a path that will eventually guarantee reaching its goals, with the bisimulation property given in Section \ref{sec:humanRobotInteraction} guaranteeing the robot will follow this path correctly. 
	The human generated plans provide a more efficient way of reaching goals, but can be risky if there is incomplete environmental information. However, using the monitoring and switching framework outlined above, the robot is guaranteed to never collide with an obstacle and will return to the proven safe method. This switching will continue until all goals are reached or a complete obstacle map is generated. Assuming that the human is guaranteed to never permanently take control of one of the robots, the fairness assumption used in Lemma \ref{lemma:indvSpec} is satisfied. 
	$\hfill\bullet$
\end{remark}

\section{Deadlock- and Livelock-free Algorithms}\label{sec:liveness}

As discussed briefly in Section \ref{sec:commObsControl}, the previously described collision avoidance protocol does not prevent deadlock, e.g., cases in which two or more robots cannot move toward their goals because they are blocking each other's paths. Naively allowing robots to trade goals in such a case could result in livelock situations~\cite{raman2013towards}, in which robots could be cyclically re-trading goals over multiple encounters while never actually reaching them. To address the problem, we propose the following deadlock- and livelock-free algorithms, which collectively guarantee the reachability property $\bigwedge_{j\ \in Goals} \lozenge \pi_j^g$, i.e., that all goals are eventually visited by a robot.

Let each robot $i \in I_R$ have a ``current'' goal $g_i \in Goals$ and a set of ``next'' goals 
$Goals'_i \subset Goals$ such that $g_i \cap Goals'_i = \emptyset$, 
$Goals'_i \neq \emptyset \rightarrow g_i \neq \emptyset$, and the union of all current and next goals for all robots is $G_R$ = $\bigcup_{i \in I_R} (g_i \cup Goals'_i)$, with $G_R = Goals$ at initialization. 
Let us also suppose that $(g_i \cup Goals'_i) \cap (g_j \cup Goals'_j) = \emptyset$ for all $i, j \in I_R$ where $i \neq j$. 
This last condition is not strictly necessary to ensure that all goals are eventually reached, 
but it helps avoid duplicate visits to goals by multiple robots and the need to remove a goal from a robot's set of goals  
if another robot is observed at the goal location. 
Denote the planned path for robot $i$ as $\sigma^i =s_0^i \overset{\alpha_0^i}{\rightarrow} s_1^i\overset{\alpha_1^i}{\rightarrow}s_2^i\ldots$
and let $\sigma^i[k..]$ denote the suffix of $\sigma^i$ starting at $k$, i.e., $s_k^i \overset{\alpha_k^i}{\rightarrow} s_{k+1}^i\overset{\alpha_{k+1}^i}{\rightarrow}s_{k+2}^i\ldots$.
Then deadlock- and livelock-free algorithms that ensure all the goals are reached under a set of mild assumptions are as follows. 

\begin{algorithm}
\caption{Deadlock- and livelock-free, human-in-the-loop symbolic motion planning $\forall i \in I_R$}
\SetAlgoNoLine
\KwIn{$Goals'_i \subset Goals$ and $g_i \in Goals$ such that $g_i\cap Goals'_i = \emptyset$, $Goals'_i\neq \emptyset \rightarrow g_i \neq \emptyset$, and the union of all current and next goals for all robots $G_R$, with $G_R = Goals$ at initialization.}
\KwOut{Deadlock- and livelock-free, human-in-the-loop symbolic path planning for every $i\in I_R$}
\For{i++}{
	Plan Path(); \textbf{//ALGORITHM 2}\
	
\Repeat{false}{
	Follow Path(); \textbf{//ALGORITHM 3}
}
}
\end{algorithm}

\begin{algorithm}
\caption{Plan Path()}
\SetAlgoNoLine
\eIf{$g_i \neq \emptyset$ and no other robots are in communication range}{
      Plan a path $\sigma^i$ to $g_i$\;
      }
{\If{$g_i \neq \emptyset$ and other robots $I \subseteq I_R$ are in communication range}{
      Ensure that for $g^* = \min\{g_i \mid i \in I \}$, the currently assigned robot $i$ can reach it given the position of the other robots, or it is assigned to the robot $j$ closest to $g^*$ such that $\sigma^j = \sigma^i[k..]$ for some $k > 0$\;
      Reassign all other goals in $\bigcup_{i \in I} (Goals'_i \cup g_i) \setminus g^*$ to robots in $I$;
     }
     }
\end{algorithm}

\begin{algorithm}
\caption{Follow Path()}
\SetAlgoNoLine
\Repeat{Obstacles or other robots in range, or human intervention}{
Drive robot using the LQR law (\ref{eq:AP_control})\;
}
\If{Obstacle in observation range $r_i$}{
   Record obstacle\;
   \If{Obstacle in path}{
      Plan Path(); \textbf{//ALGORITHM 2}
	}
    Follow Path(); \textbf{//ALGORITHM 3}
}
\If{Other robots in communication range $\rho_i$}{
	Share obstacle and path data\;
    Plan Path(); \textbf{//ALGORITHM 2}\
    
    Follow Path(); \textbf{//ALGORITHM 3}
}
\If{Human control requested and granted}{
   \Repeat{Human is not in control}{
      \If{Obstacle too close}{
         release human control\;
		break\;
		}
      \If{Exited obstacle region}{
          Release human control\;
          break\;
     }
     Plan Path(); \textbf{//ALGORITHM 2}\
     
     Follow Path(); \textbf{//ALGORITHM 3}
}
}
\If{$g_i$ reached}{
   \If{$Goals'_i \neq \emptyset$}{
	Set $g_i$ to a goal in $Goals'_i$\;
	Set $Goals'_i$ to $Goals'_i \setminus g_i$\;
	}
   \If{$Goals'_i = \emptyset$}{
      break;
    }
}
\end{algorithm}

The assumptions and proof of correctness are as follows.

\begin{assumption}
There is an obstacle-free path of finite length between every pair of obstacle-free cells.
\label{a:obstacleFree}
\end{assumption}

\begin{assumption}\label{assumption:comm}
For every robot $i \in I_R$, the communication range $\rho_i$ is greater than the size of any single cell. 
\label{a:commRange}
\end{assumption}

\begin{assumption}
The human operator can only choose to control a robot if the robot requests to move through a set of obstacles. 
Human control of the robot ends in finite time.
\label{a:humanControl}
\end{assumption}

\begin{assumption}
Every robot executes actions infinitely often, where remaining in the current cell for a finite amount of time is considered valid action. All actions complete in finite time.
\label{a:fairness}
\end{assumption}

\begin{lemma}
Every robot $i$ is able to plan a path $\sigma^i$ to its current goal $g_i$ such that the number of actions needed to reach $g_i$ is finite, thus preventing deadlock.
\label{lemma:generalProgress}
\end{lemma}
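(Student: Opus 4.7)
The plan is to show that the \textbf{Plan Path()} routine of Algorithm 2 always returns a finite-length plan for every robot and that the goal-reassignment scheme precludes the mutual-blocking configurations that would otherwise constitute deadlock. I would proceed in two cases mirroring the conditional branches of the algorithm.

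The isolated case, in which no other robots lie within robot $i$'s communication range, is immediate: the transition system $TS_i$ is finite with $Q$ cells, all known obstacles are encoded in its labeling function, and by Assumption \ref{a:obstacleFree} there is an obstacle-free path of finite length between robot $i$'s current cell and $g_i$. Hence the NuSMV-based synthesis procedure of Section \ref{subsec:symbolicMotionPlanning} returns some $\sigma^i$ of length bounded by $Q$. The substantive case is when a set $I \subseteq I_R$ of robots lies within mutual communication range. Letting $g^\ast = \min\{g_i \mid i \in I\}$ and letting $i^\ast$ be the robot currently assigned $g^\ast$, Algorithm 2 either (a) retains the assignment of $g^\ast$ to $i^\ast$ whenever the current positions of the other robots (treated as temporary obstacles in $TS_{i^\ast}$) still admit a feasible path from $i^\ast$ to $g^\ast$, or (b) reassigns $g^\ast$ to the robot $j \in I$ that is closest to $g^\ast$ along $\sigma^{i^\ast}$, giving $j$ the suffix $\sigma^{i^\ast}[k..]$ as its new plan. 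The key observation for case (b) is that, by the ``closest to $g^\ast$'' choice, no other robot of $I$ lies on $\sigma^{i^\ast}[k..]$ between $j$ and $g^\ast$, so this suffix is collision-free with respect to the currently visible robots; combined with Assumption \ref{a:commRange} this yields a finite executable plan for $j$. The remaining goals $\bigcup_{i \in I}(g_i \cup Goals'_i) \setminus \{g^\ast\}$ are then redistributed among $I \setminus \{j\}$, and the same argument applies inductively to this strictly smaller subsystem.

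The main obstacle I anticipate is validating case (b): showing that an eligible $j$ always exists on $\sigma^{i^\ast}$ whenever $i^\ast$ cannot reach $g^\ast$. This reduces to the contrapositive observation that if $i^\ast$'s previously planned path is not currently feasible, then some robot must occupy a cell of $\sigma^{i^\ast}$, and by Assumption \ref{a:commRange} any such blocker lies inside the communication range of $i^\ast$ and is therefore in $I$, making it eligible as the closest candidate on $\sigma^{i^\ast}$. Once case (b) is justified, the finite length of the reassigned plan follows from the finiteness of the workspace. Deadlock is then precluded in the sense required by the lemma: every invocation of \textbf{Plan Path()} guarantees a feasible finite plan to the lowest-indexed current goal in $I$, so at each communication encounter at least one robot can make genuine progress toward $g^\ast$ rather than remaining blocked indefinitely, and the inductive argument extends this conclusion to every robot.
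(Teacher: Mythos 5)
Your overall strategy matches the paper's: a case split on whether other robots are in communication range, with the isolated case discharged by Assumption \ref{a:obstacleFree} and the multi-robot case discharged by the property that \textbf{Plan Path()} never leaves $g^\ast$ assigned to a robot that cannot reach it. In fact your treatment of the reassignment branch is more detailed than the paper's one-line appeal to that property, and your contrapositive argument for why an eligible $j$ on the suffix $\sigma^{i^\ast}[k..]$ must exist is a reasonable fleshing-out of what the paper leaves implicit (though note that Assumption \ref{a:commRange} only guarantees that a blocker in an \emph{adjacent} cell is within communication range, not one several cells down the path; the paper is equally silent on this, so I would not count it against you).

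The genuine gap is that you never address manual planning. This lemma sits inside the human-in-the-loop algorithm (Algorithm 3 contains the ``Human control requested and granted'' branch), and the paper's proof explicitly adds a third case: when the planned path \emph{does} include obstacle cells because the human has engaged manual planning, finiteness of the number of actions to reach $g_i$ relies on Assumption \ref{a:humanControl}, i.e., that human control ends in finite time (after which the robot reverts to the safe autonomous planner and Assumption \ref{a:obstacleFree} applies again). Your proof only ever considers paths synthesized autonomously over $TS_i$ with obstacles excluded, so it does not cover executions in which the human routes the robot through obstacle regions; without invoking Assumption \ref{a:humanControl}, nothing in your argument rules out the human holding the robot indefinitely on such a path, which would defeat the finiteness claim. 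Adding that case, with an explicit appeal to Assumption \ref{a:humanControl}, closes the gap.
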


\begin{proof} 
This follows from Assumption \ref{a:obstacleFree} in the case that the planned path does not include any obstacle cells, additionally from Assumption \ref{a:humanControl} if the planned path does include obstacle cells due to the human engaging manual planning, and the fact that when multiple robots are in communication range, the inter-robot collision protocol never assigns a current goal $g_i$ to a robot that cannot reach it given the current position of the other robots. 
\end{proof}

\begin{lemma}
Whenever two or more robots with indices $I \subseteq I_R$ cooperatively replan their paths according to the inter-robot collision protocol, if the goal $g^* = \min \{g_i \mid i \in I\}$ is reassigned, the number of actions needed by the newly assigned robot to reach it decreases, thus preventing livelock.
\label{lemma:collisionProgress}
\end{lemma}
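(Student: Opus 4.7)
The plan is to read off the required decrease directly from the reassignment rule in Algorithm~2, then use this decrease as the well-founded measure that rules out livelock. Recall that in Algorithm~2, when a collection of robots $I\subseteq I_R$ within communication range cooperatively replans, the goal $g^{*}=\min\{g_i\mid i\in I\}$ is only reassigned if the currently assigned robot $i$ cannot reach $g^{*}$ given the positions of the others; in that case, $g^{*}$ is given to the robot $j\in I$ closest to $g^{*}$ such that the new planned path satisfies $\sigma^{j}=\sigma^{i}[k..]$ for some integer $k>0$. Thus the reassignment rule itself already encodes that $\sigma^{j}$ is a \emph{strict} suffix of $\sigma^{i}$.

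First, I would make the ``number of actions needed to reach $g^{*}$'' explicit: define $N(\sigma)$ as the index of the first state in a path $\sigma$ at which the label $\pi_{g^{*}}^{g}$ holds. Under Assumption~\ref{a:obstacleFree} and Lemma~\ref{lemma:generalProgress}, $N(\sigma^{i})$ is finite whenever $\sigma^{i}$ is a valid plan to $g^{*}$. Because $\sigma^{j}=\sigma^{i}[k..]$ for some $k>0$, we immediately get
\begin{equation*}
N(\sigma^{j}) \;=\; N(\sigma^{i}) - k \;<\; N(\sigma^{i}),
\end{equation*}
which is precisely the claimed decrease in the number of actions needed. In particular, this is a strict decrease into the non-negative integers.

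Second, I would use this strict decrease to rule out livelock. Consider any infinite execution of the protocol and focus on the goal $g^{*}$ as it is (possibly) reassigned at a sequence of replanning events $t_{1}<t_{2}<\dots$. Let $N_{\ell}$ denote the value $N(\sigma^{\cdot})$ of the path assigned to the robot currently holding $g^{*}$ just after event $t_{\ell}$. By the argument above, $(N_{\ell})$ is a strictly decreasing sequence of non-negative integers, so by well-foundedness the sequence of reassignments of $g^{*}$ must be finite. After the last reassignment, $g^{*}$ is fixed to a single robot, which by Lemma~\ref{lemma:generalProgress} reaches it in finitely many actions (using Assumptions~\ref{a:humanControl} and~\ref{a:fairness} to dispatch the human-control and progress cases). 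Applying this argument inductively to the next smallest outstanding goal after $g^{*}$ is served then shows every goal in $Goals$ is eventually reached, which together with Lemma~\ref{lemma:generalProgress} completes the livelock-freedom statement.

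The main obstacle I anticipate is justifying that the reassignment rule is actually executable as written, i.e.\ that a robot $j\in I$ with $\sigma^{j}=\sigma^{i}[k..]$ for some $k>0$ exists whenever $i$ cannot reach $g^{*}$. This requires Assumption~\ref{assumption:comm} (so that the communication range covers at least one cell of $\sigma^{i}$ occupied by some $j\in I$) and Assumption~\ref{a:obstacleFree} (so that the suffix remains obstacle-free or, if not, can be replanned with a still-strictly-smaller action count by rerunning the reachability argument of Lemma~\ref{lemma:generalProgress}). I would therefore devote the bulk of the proof to this feasibility step, after which the strict-decrease and well-foundedness conclusion is immediate.
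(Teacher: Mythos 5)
Your proof is correct and takes essentially the same route as the paper's: the reassignment rule forces the new path to be a strict suffix $\sigma^j=\sigma^i[k..]$ with $k>0$ of the blocked robot's path, so the action count to $g^*$ strictly decreases. The well-founded descent you append is really the content of Theorem~\ref{theorem:reachability} rather than of this lemma, and the feasibility concern you flag (that some robot $j\in I$ actually occupies a cell of $\sigma^i$ and can take the suffix as its plan) is the same point the paper's own proof settles only with a one-line appeal to Assumption~\ref{a:obstacleFree}.
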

\begin{proof} 
Due to Assumption \ref{a:obstacleFree}, if the robot $i$ assigned to $g^*$ is unable to continue on its planned path to $g^*$, it can only be because one or more other robots occupy cells on this path. The ``Path Plan" algorithm reassigns $g^*$ to the robot $j$ that requires the least number of cell transitions to reach $g^*$ along the original path of robot $i$. 
By Assumption \ref{a:obstacleFree}, there exists a valid path $\sigma^j = \sigma^i[k..]$ for some $k > 0$, and the number of actions in the path $\sigma^j$ to the goal cell $g^*$ is less than the original path $\sigma^i$.
\end{proof}

\begin{theorem}[Reachability]
Every goal is eventually reached by a robot.
\label{theorem:reachability}
\end{theorem}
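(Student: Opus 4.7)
The plan is to combine Lemma~\ref{lemma:generalProgress} and Lemma~\ref{lemma:collisionProgress} with the fairness Assumption~\ref{a:fairness} via an induction on the set of goals, using as the induction variable the goal of smallest index that has not yet been visited. At any point during execution, let $G_R \subseteq Goals$ denote the set of goals that remain unassigned-as-visited, and let $g^{\star} = \min\{g \in G_R\}$. I will show that $g^{\star}$ is reached in finite time; since $Goals$ is finite and $|G_R|$ strictly decreases upon each such event, iterating this argument yields the theorem.

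First I would isolate the robot $i^{\star}$ currently assigned to $g^{\star}$ (by the precondition $Goals'_i \neq \emptyset \rightarrow g_i \neq \emptyset$ and the partition property on current/next goals, such a robot exists and is unique). By Lemma~\ref{lemma:generalProgress}, $i^{\star}$ has planned a path $\sigma^{i^{\star}}$ to $g^{\star}$ of finite length. Along this path one of three things may happen: (a) the path executes to completion, in which case $g^{\star}$ is reached; (b) an obstacle is detected and the robot replans in-place, in which case Assumption~\ref{a:obstacleFree} and Lemma~\ref{lemma:generalProgress} again guarantee a finite-length path exists, and Assumption~\ref{a:humanControl} guarantees that any excursion into manual mode terminates in finite time; or (c) robots in $I \ni i^{\star}$ enter communication range and cooperatively replan per Algorithm~2.

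The crux is handling case (c), which is where livelock could occur. By the prioritization rule in Algorithm~2, the goal $g^{\star} = \min\{g_j \mid j \in I\}$ is either retained by $i^{\star}$ (in which case the number of remaining actions on $\sigma^{i^{\star}}$ is bounded above by the original finite length) or reassigned to some robot $j \in I$ with $\sigma^j = \sigma^{i^{\star}}[k..]$ for some $k > 0$. By Lemma~\ref{lemma:collisionProgress}, this reassignment strictly decreases the integer-valued quantity ``number of actions remaining to reach $g^{\star}$.'' Since this quantity is bounded below by $0$, only finitely many reassignments of $g^{\star}$ can occur, and between reassignments Assumption~\ref{a:fairness} guarantees that the assigned robot makes progress (each action completes in finite time and actions occur infinitely often). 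Combining these, $g^{\star}$ is reached in finite time.

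Once $g^{\star}$ is reached, the ``Follow Path'' algorithm advances the responsible robot to a new current goal from $Goals'_{i^{\star}}$ (if nonempty) and $g^{\star}$ is removed from $G_R$. The new minimum of $G_R$ then plays the role of $g^{\star}$, and the same argument applies. Since $|Goals|$ is finite, after finitely many such iterations $G_R = \emptyset$, establishing $\bigwedge_{j \in Goals} \lozenge \pi^g_j$. The main obstacle I expect is making the ``number of actions remaining to $g^{\star}$'' well-defined as a monotone quantity across the interleaving of all robots' actions and the switches to manual mode; I would handle this by defining it as the length of the currently planned path suffix leading to $g^{\star}$ under the robot presently assigned to $g^{\star}$, and verifying that each of cases (a)--(c) together with Assumption~\ref{a:humanControl} either decreases this quantity or reaches $g^{\star}$ outright.
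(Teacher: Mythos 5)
Your proof follows essentially the same route as the paper's: an induction on the finite set $G_R$ anchored at the minimum unvisited goal, with Lemma~\ref{lemma:generalProgress} supplying finite-length progress for the assigned robot and Lemma~\ref{lemma:collisionProgress} supplying a strictly decreasing, nonnegative measure that bounds the number of reassignments during collision-avoidance replanning. Your write-up is in fact somewhat more explicit than the paper's about why only finitely many reassignments of $g^{\star}$ can occur and why the decreasing measure is well-defined, but the argument is the same.
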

\begin{proof}
Initially, let $G_R$ be the union of all current and next goals for all robots, i.e., $G_R = \bigcup_{i \in I_R} (g_i \cup Goals_i) = Goals$. The proof is by induction.

\noindent Base case: $g_i = \min(G_R)$ for some robot $i$. Either robot $i$ reaches $g_i$ by Lemma \ref{lemma:generalProgress} or some other robot reaches it by Lemma \ref{lemma:collisionProgress}. $G_R$ is then set to $G_R\setminus g_i$, decreasing the size of $G_R$ by 1.

\vspace{0.5em}\noindent Induction: $g_i > \min(G_R)$ for some robot $i$. Either robot $i$ reaches $g_i$ by Lemma \ref{lemma:generalProgress}, $G_R$ is then set to $G_R \setminus g_i$, and the size of $G_R$ reduces by 1, or the robot enters the collision avoidance protocol with a set of robots and one of those other robots continues progress on $g_i$ or some $g_j < g_i$ by Lemma \ref{lemma:collisionProgress}.

\vspace{0.5em}\noindent Since $G_R$ is finite, eventually $G_R = \emptyset$ and all goals are reached.
\end{proof}

%
%

Under the above deadlock- and livelock-free algorithms, we can resolve collision avoidance among 3 robots by goal reassignment, a problem we could not previously solve using a more simple protocol. We illustrate the solution approach as follows. 

Fig. \ref{fig:case4} shows two example collision scenarios among 3 robots. Robots 1, 2, 3 are within each other's communication range and hence can exchange obstacle, planned path, as well as goal assignment information. As illustrated in the figure, Robot 1 and 2 will run into the same cell at the next move (i.e., the collision type depicted in Fig. \ref{fig:collision}(a)); Robot 2 and 3 are at adjacent cells and will run into each other at the next move (i.e., the collision type depicted in Fig. \ref{fig:collision}(b)). In this case, simply letting one of the robots wait and the other two replan might trap the robots in a deadlock situation. Instead, we can exchange the robots goals (i.e., assign the goals to robots that travel from cells closest to them) as in algorithms 1-3 to resolve the possible collision. Other similar collision scenarios among 3 robots can be resolved using the same path replan and goal reassignment methods.

\begin{figure}[h]
\center
\includegraphics[width = 0.5\columnwidth]{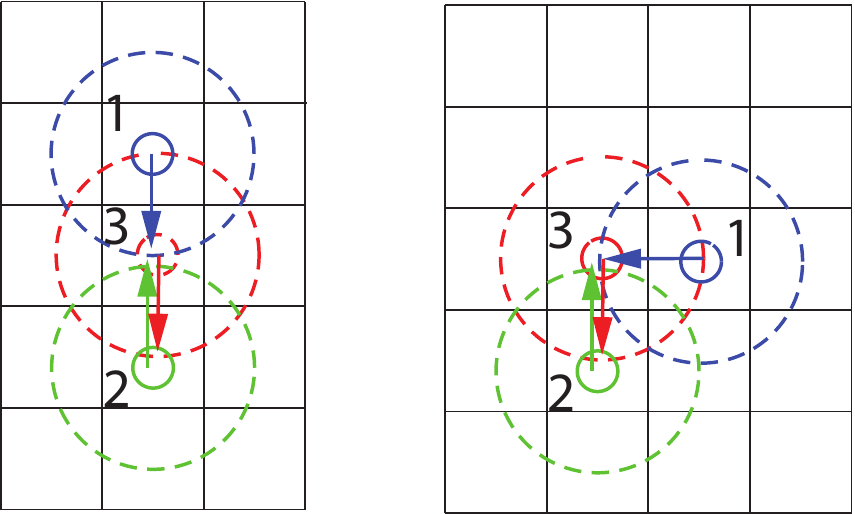}
\caption{Example collision scenarios among 3 robots: Robot 1 and 2 will run into a same cell at the next move; Robot 2 and 3 are at adjacent cells and will run into each other at the next move.}
\label{fig:case4}
\end{figure}


Fig. \ref{fig:case5} shows another two example collision scenarios among 3 robots. At step a, Robot 2 and 3 will run into a same cell at the next move (i.e., the collision type depicted in Fig. \ref{fig:collision}(a)); Robot 1 follows Robot 2 and there is no collision. According to Assumption \ref{assump:smallerID}, a robot with smaller index number will stop first, therefore Robot 2 will stop and wait for Robot 3's next move. However under this policy, at step b, Robot 1 and 2 will collide with each other. In this case, the robots' goals will be reassigned according to the above algorithms to resolve the possible collision. Other similar collision scenarios among 3 robots can be resolved using the same methods.

\begin{figure}[h]
\center
\includegraphics[width = 0.7\columnwidth]{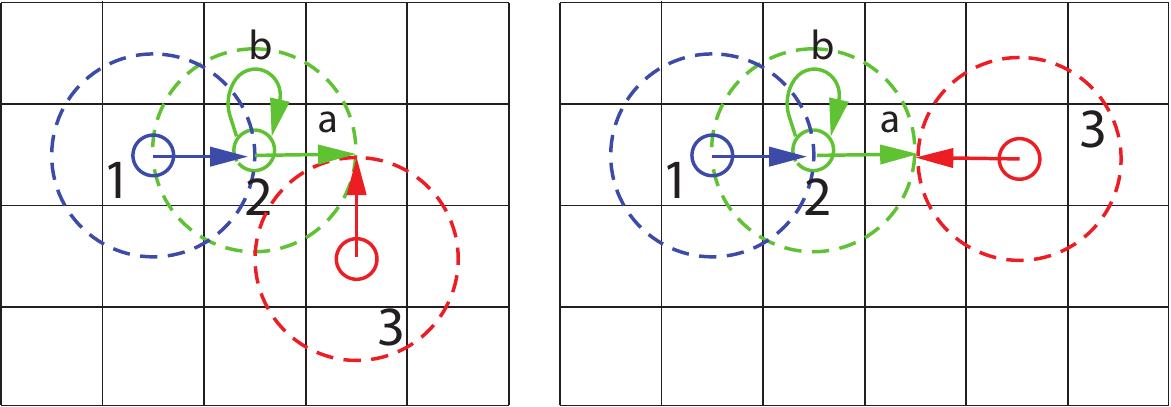}
\caption{Example collision scenarios among 3 robots: Robot 2 and 3 will run into a same cell at the next move and hence Robot 2 waits, however under this policy Robot 1 and 2 will collide at the next move.}
\label{fig:case5}
\end{figure}

Livelock is harder to visualize since it is not due to single collision avoidance event but rather the possibility of goals being cyclically reassigned over the course of multiple collision avoidance events. It is also rare since it would require just the right circumstances in regards to the positioning of obstacles and the exact reassignment of goals during each collision event. Theorem \ref{theorem:reachability} suffices to show that the algorithms are livelock-free.

\section{Simulation}
\label{sec:simulation}

In this section, a set of simulations of the ISR scenario of Section \ref{sec:humanRobotInteraction} is used to demonstrate our methods.
The simulation is conducted in Matlab with model checking performed using NuSMV. We show the simulation results for distributed symbolic motion planning for multiple autonomous robots under the proposed obstacle and inter-robot collision avoidance protocols in Section \ref{sec:sim_auto}, the human input devices, manual motion planning approach, and GUI designs for trust measurements in Section \ref{sec:sim_GUI}, and the overall trust-based motion planning strategy with a human-in-the-loop in Section \ref{sec:sim_human}. 

\subsection{Distributed Motion Planning for Multiple Autonomous Robots}\label{sec:sim_auto}
Fig. \ref{fig:progression4_keymoments} shows an example environment that consists of 3 robots with marked index numbers, 6 goals marked by diamonds, and 12 obstacles marked by crosses. The obstacles in the environment are initially unknown by the robots until they are gradually sensed. The sensor range $r_i$ of a robot is marked by a dashed circle around it. In our simulations, we set the range in a way such that a robot can always observe the 8 neighboring cells around it. Once an obstacle is sensed, its position becomes known to that individual robot. The communication range $\rho_i$ of a robot is set to be the same as the sensing range. The obstacle information, planned path, and goal assignment information are communicated with other robots when they come within communication range. The robot paths are demonstrated using bold line segments. The robot start positions are marked by rectangles and the robot current positions are marked by circles. 


\begin{figure*}[t]
	\center
	\includegraphics[width = 1\textwidth]{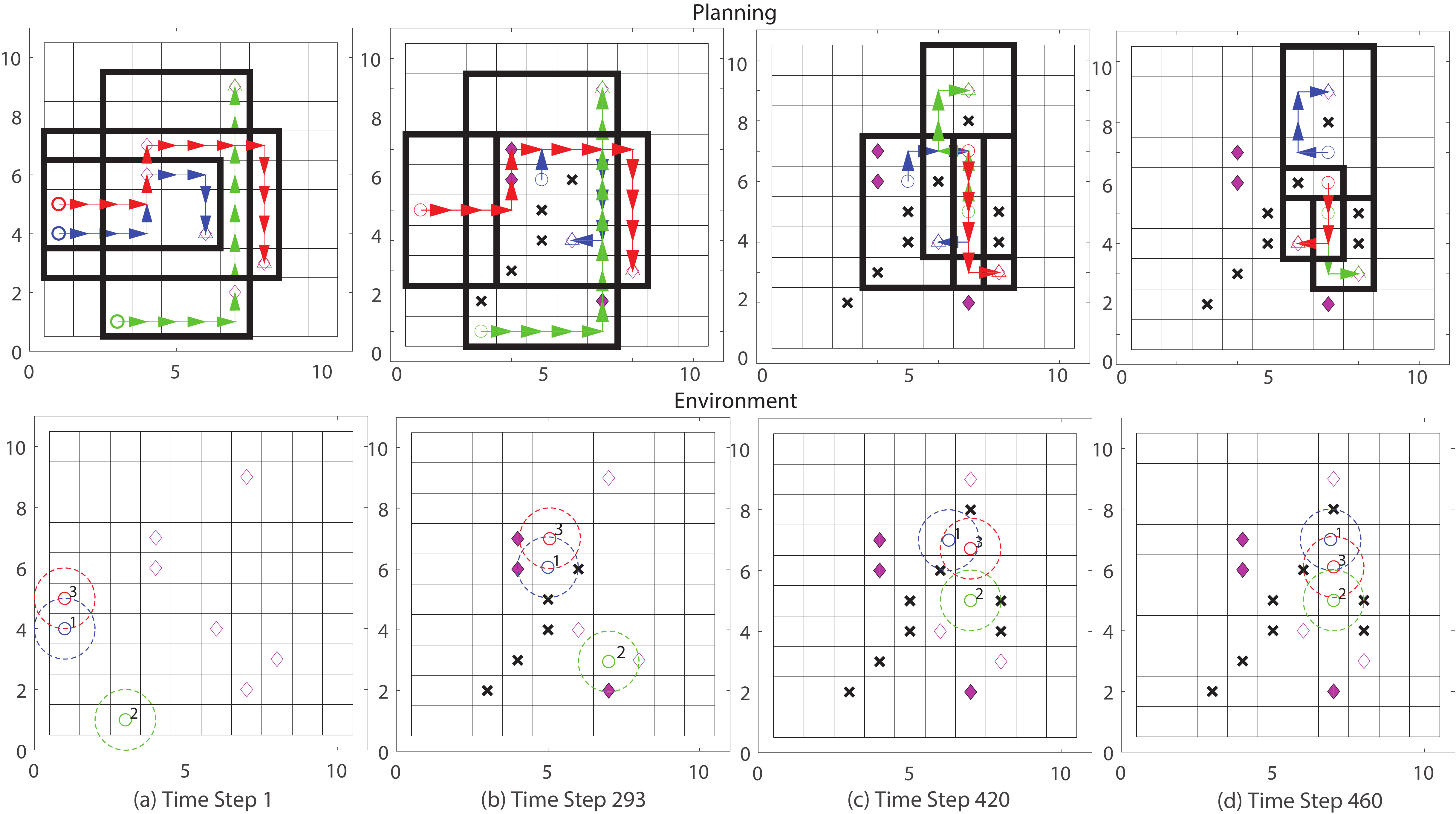}
	\caption{Progression of simulation of the inter-robot collision scenario in Fig. \ref{fig:case4} (a) at time step $t=1$ (initial plan and robot position), (b) $t=293$ (new obstacle at (6,6) detected in the path and Robot 1 replans), (c) $t=420$ (Robots 2 and 3 communicate each other's obstacle information and replan), and (d) $t=450$ (Robots 1,2, and 3 enter into the collision scenario in Fig.~\ref{fig:case4} and hence reassign goals and replan.) }
	\label{fig:progression4}
\end{figure*}

\begin{figure*}[t]
	\center
	\includegraphics[width = 0.45\textwidth]{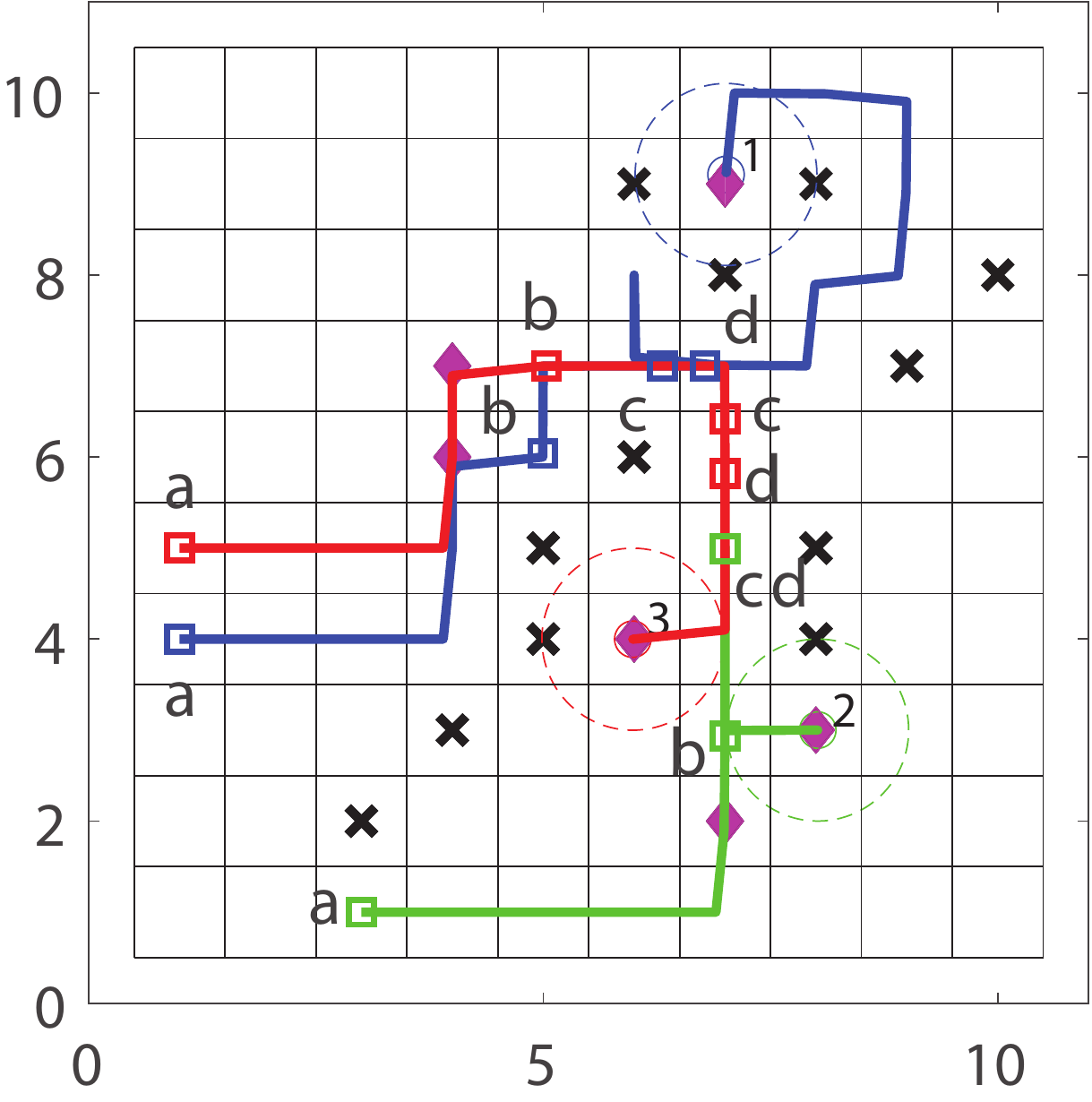}
	\caption{Final paths and key moments (marked by rectangles from step (a)-(d)) of each robot under the simulation scenario shown in Fig. \ref{fig:progression4}.}
	\label{fig:progression4_keymoments}
\end{figure*}

\begin{figure*}[t]
	\center
	\includegraphics[width = 1\textwidth]{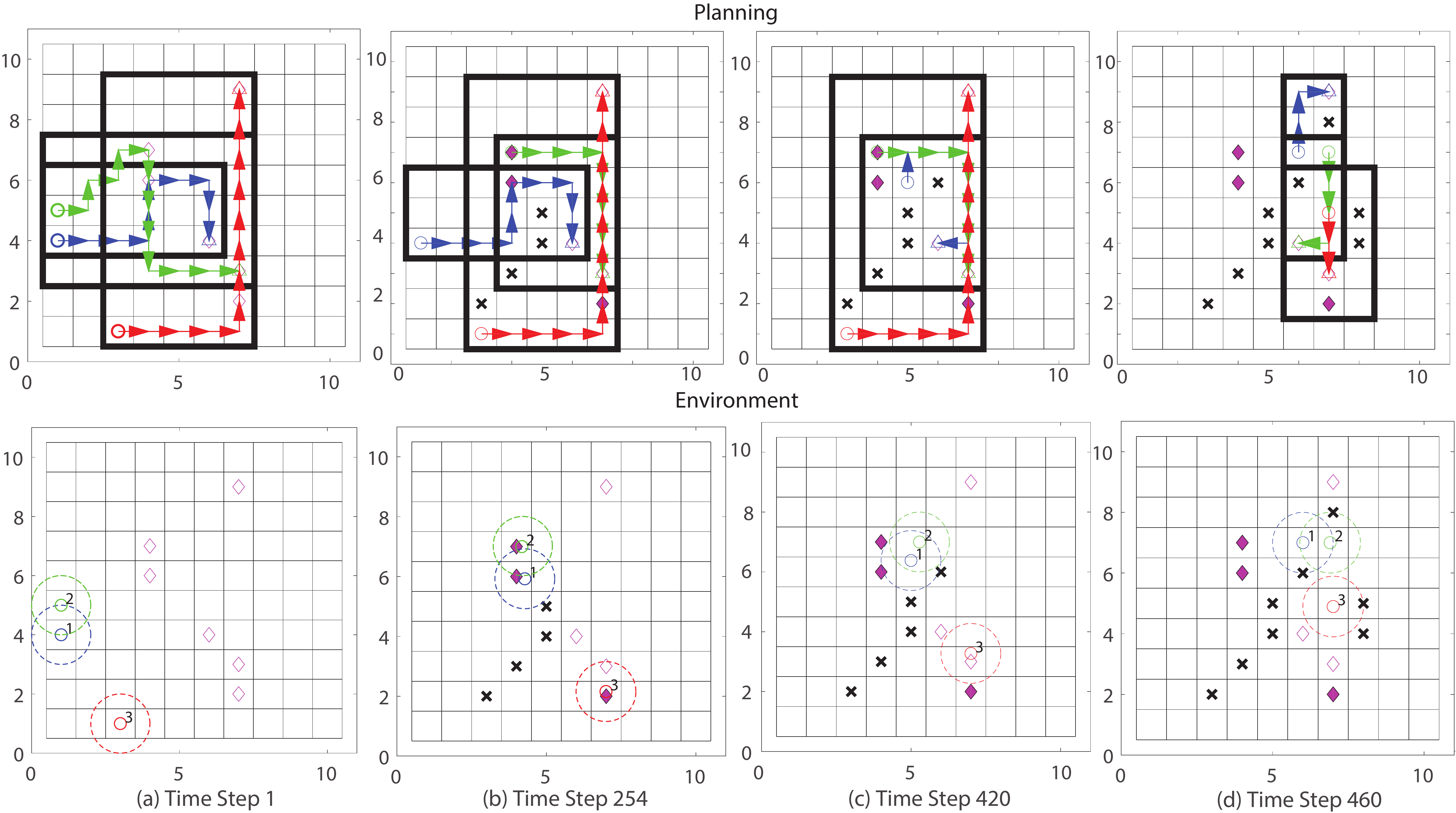}
	\caption{Progression of simulation of the inter-robot collision scenario in Fig. \ref{fig:case5} (a) at time step $t=1$ (initial plan and robot position), (b) $t=250$ (Robots 1 and 2 are within each other's communication ranges; Robot 2 finds new obstacles in the path and hence replans), $t=310$ (new obstacle at (6,6) detected in the path and Robot 1 replans), and (d) $t=391$ (Robots 1,2, and 3 enter into the collision scenario in Fig.~\ref{fig:case5} and hence reassign goals and replan.) }
	\label{fig:progression5}
\end{figure*}

\begin{figure*}[t]
	\center
	\includegraphics[width = 0.45\textwidth]{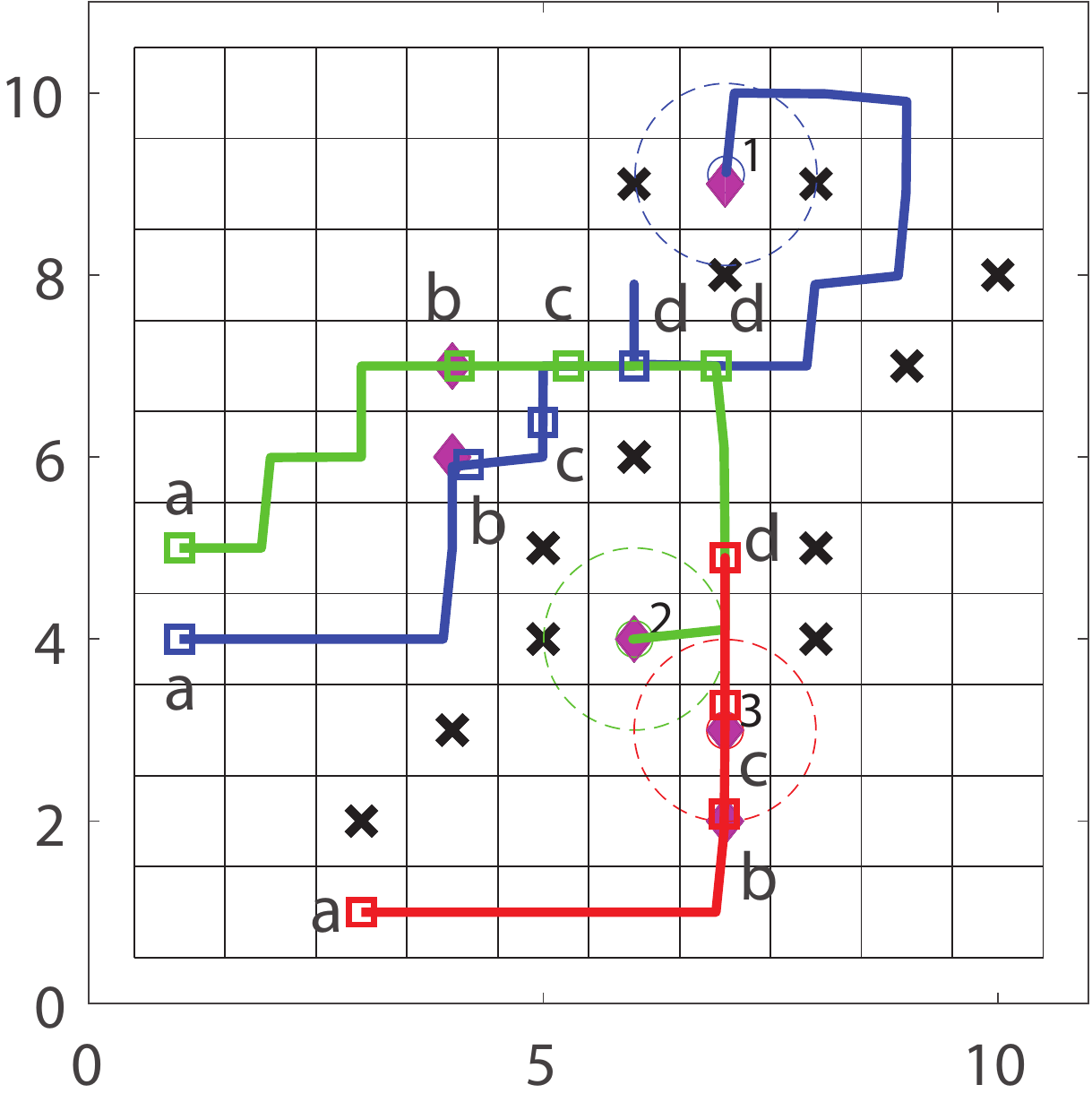}
	\caption{Final paths and key moments (marked by rectangles from step (a)-(d)) of each robot under the simulation scenario shown in Fig. \ref{fig:progression5}.}
	\label{fig:progression5_keymoments}
\end{figure*}

To find a path such that the local subspecification is satisfied, the symbolic motion planning approach described in Section \ref{subsec:symbolicMotionPlanning} is used by each individual robot. To address computational complexity, the compositional reasoning concept is used to decompose (\ref{eq:globalSpec}) into local subspecifications as described in Section \ref{sec:compositionalReasoining}. 
To further reduce computational complexity, each robot only computes its path over a local subset (see the black boxes in Figs. \ref{fig:progression4} and \ref{fig:progression5}). This subset is determined using the robot's knowledge of its assigned goals and the location of obstacles that have been sensed. 
Using this method, every path generated by an individual robot will end in either reaching its goals or detecting an obstacle and will continue until all obstacles are detected. To prevent collisions among the robots, the communication and control protocols detailed in Section \ref{sec:commObsControl} and the deadlock- and livelock-free algorithms are used whenever the robots are within each other's communication range. Therefore, the robots are guaranteed to eventually reach their goals. 

Fig. \ref{fig:progression4} shows the motion planning process to avoid the collision scenario as shown in Fig.~\ref{fig:case4}. Fig. \ref{fig:progression4_keymoments} shows the final path of each robot and marks the key moments (a)-(d) corresponding to time step $t=1$ to $t=450$ as illustrated in Fig.~\ref{fig:progression4}. Fig. \ref{fig:progression5} shows the motion planning process to avoid the collision scenario as shown in Fig.~\ref{fig:case5}. Fig. \ref{fig:progression5_keymoments} shows the final path of each robot and marks the key moments as illustrated in Fig.~\ref{fig:progression5}. The motion planning process for the collision scenarios depicted in Fig.~\ref{fig:collision} can be demonstrated in a similar fashion.

\subsection{Human Input Devices, Manual Motion Planning, and GUI Design}\label{sec:sim_GUI}


\begin{figure}[h]
\center
\includegraphics[width = 0.7\columnwidth]{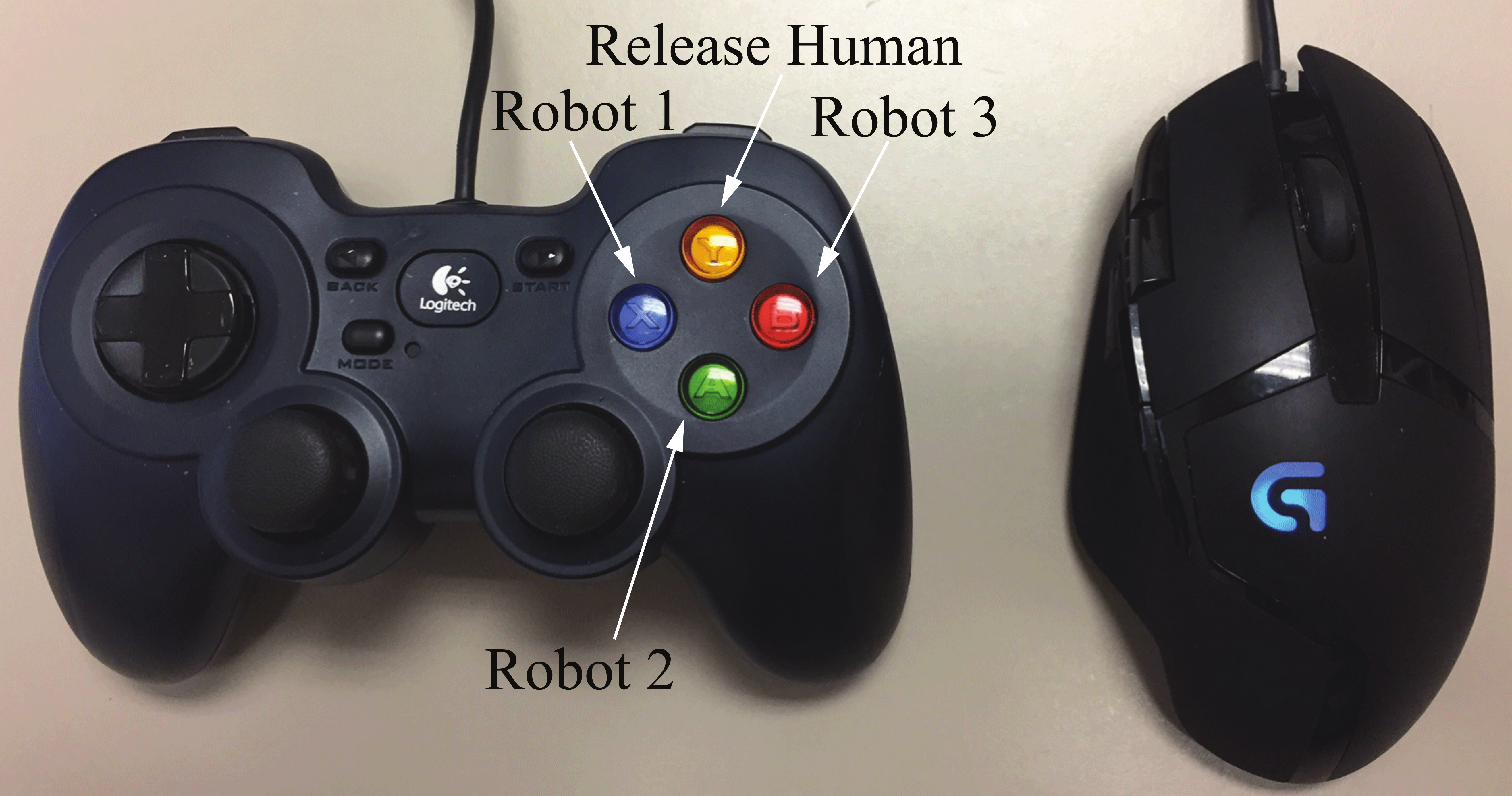}
\caption{Human input devices: gamepad and mouse. The gamepad buttons are used to select and confirm a robot to collaborate with, e.g., ``blue" button for Robot 1, ``green" button for Robot 2, ``red" button for Robot 3, and ``yellow" button for releasing the human. The mouse is used to give waypoints to a robot manually.}
\label{fig:humaninput}
\end{figure}

\begin{figure}[h]
\center
\includegraphics[width = 0.5\columnwidth]{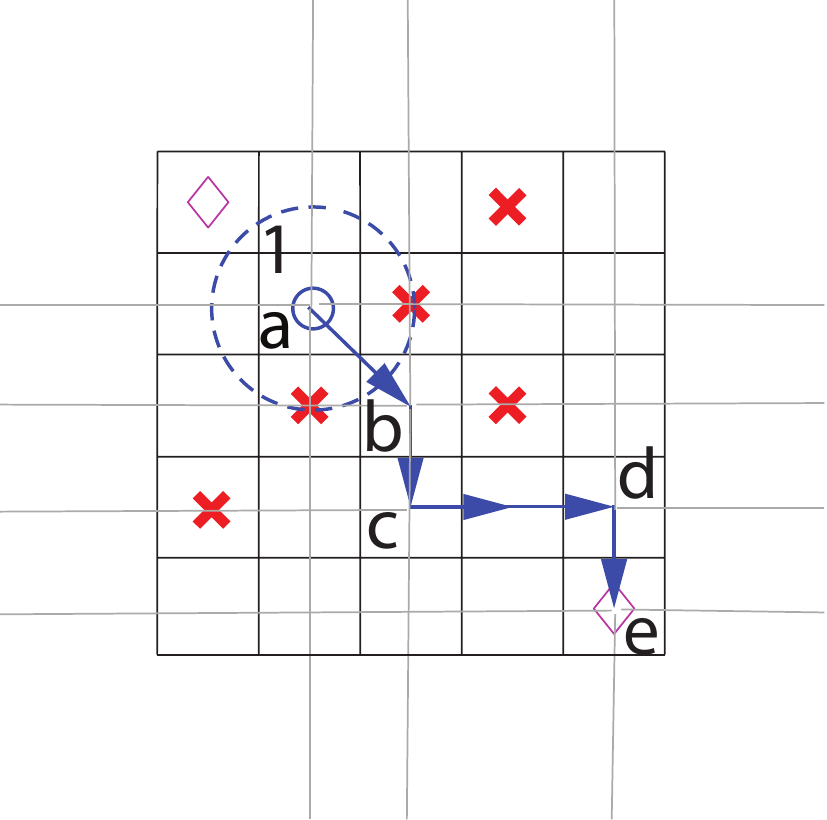}
\caption{Illustration of manual motion planning using a mouse. The human assigns waypoints starting from a, passing between two obstacles to point b, and then point c, d, and eventually reaching the goal e.}
\label{fig:manualpath}
\end{figure}

\begin{figure}[t]
        \centering
\begin{tabular}{c}
       \subfigure[]         
                {\includegraphics[width=4.5in]{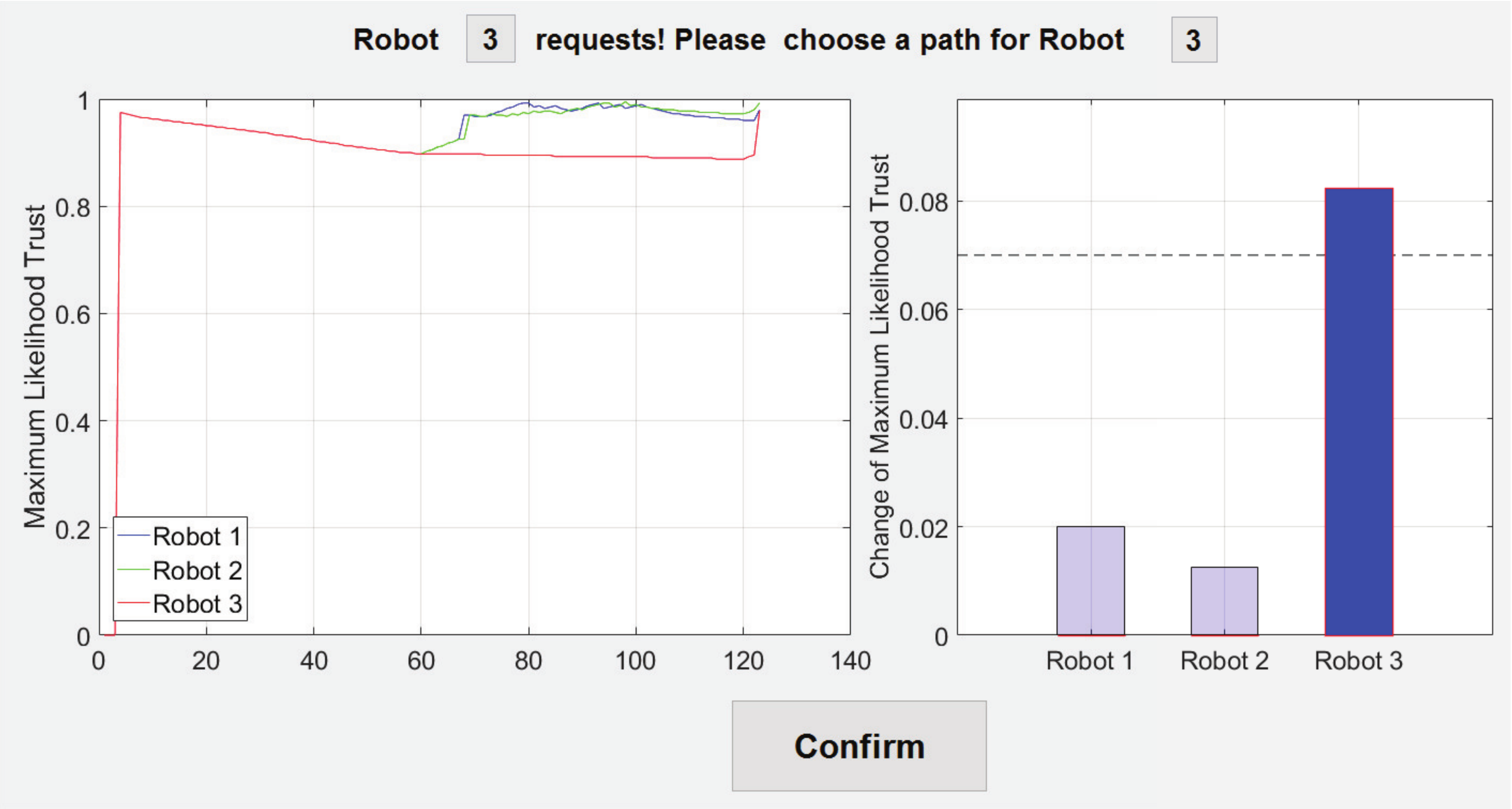}}\\
      \subfigure[]
                {\includegraphics[width=2.3in]{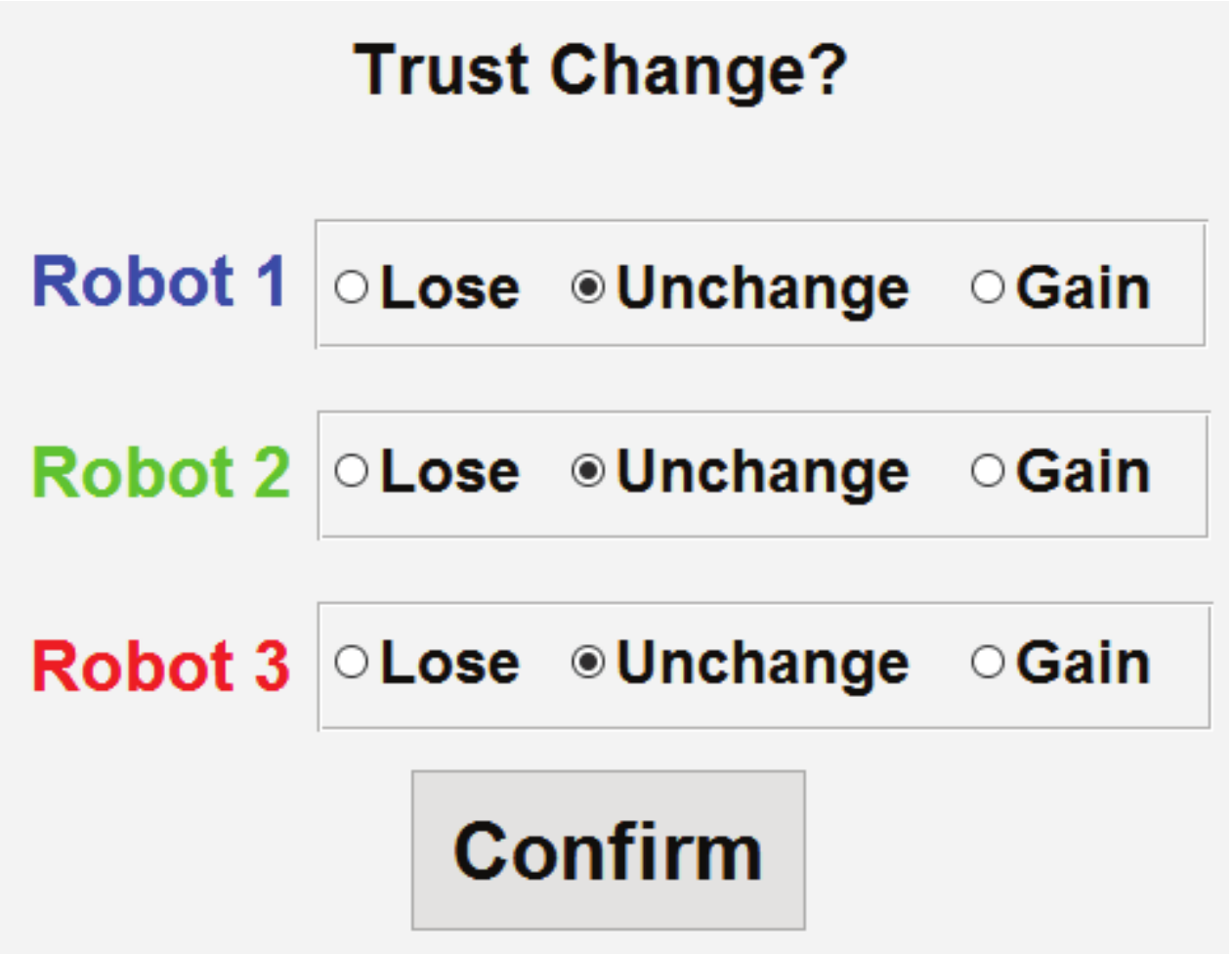}}\\
       \subfigure[]
                {\includegraphics[width=3.8in]{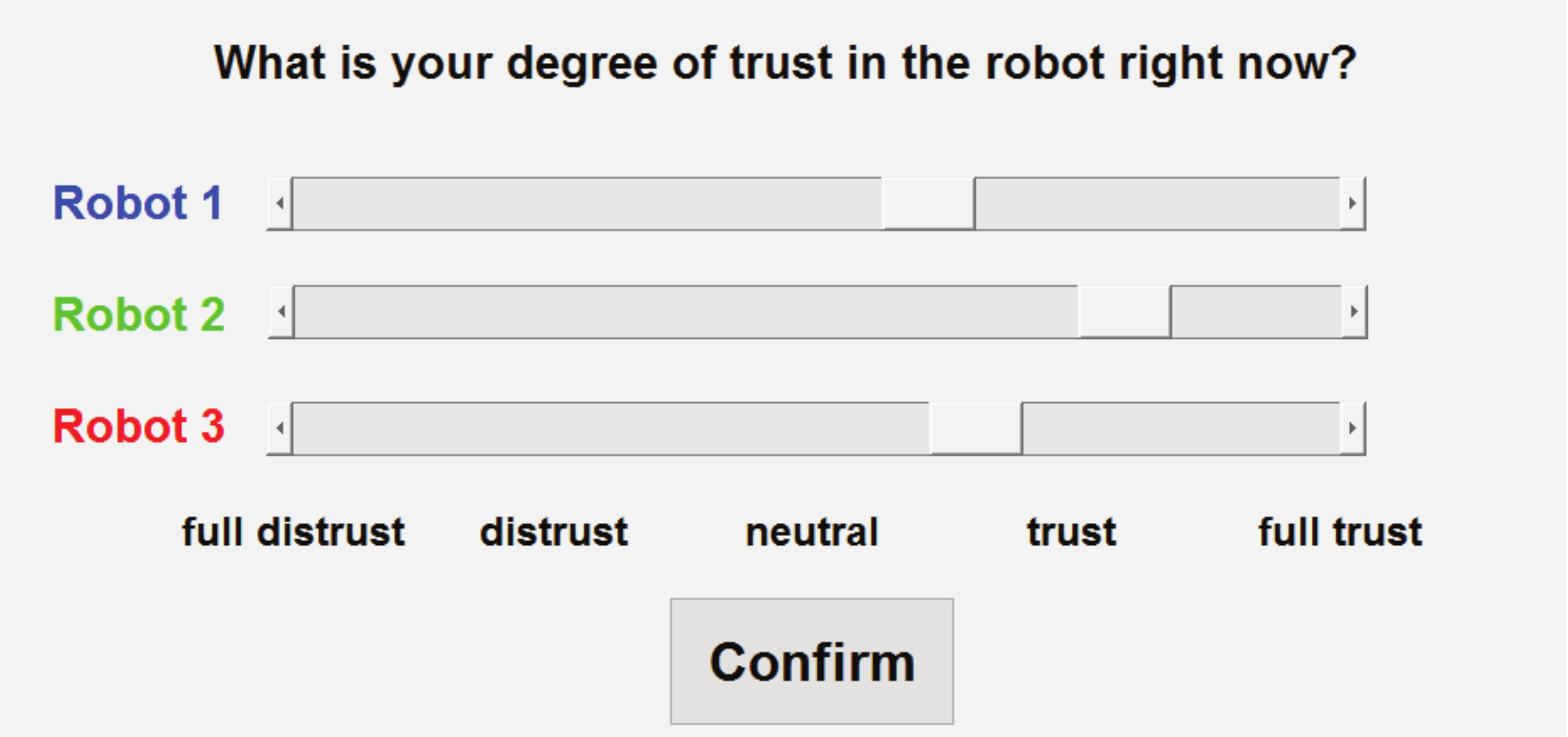}}
\end{tabular}
         \caption{GUI designs: (a) Comparison of robots' maximum likelihood trust and request for manual motion planning, (b) Measure of trust change $c_i$, and (c) Measure of trust feedback $f_i$.}\label{fig:GUI}
\end{figure}

A human subject can choose to collaborate with a robot using a gamepad (see Fig. \ref{fig:humaninput}) and replan a robot path using a mouse (see Fig. \ref{fig:manualpath}). Fig. \ref{fig:GUI} shows the GUI designs used in the simulation for human intervention and collaboration. Fig. \ref{fig:GUI}(a) shows the dynamic evolution of the maximum likelihood trust for all robots and compares the change of trust with a preset threshold. Once the change of maximum likelihood trust exceeds the threshold, the robot with the largest trust increase will request manual motion planning (e.g., Robot 3 requests that the human chooses a path for it as shown in the figure where the threshold is set as 0.07). Fig. \ref{fig:GUI}(b) shows the GUI for measuring the trust change $c_i$ to be used in the calculation of trust belief (\ref{eq:BayesUpdates}) for all robots where ``Lose" corresponds to ``-1", ``Unchange" corresponds to ``0", and ``Gain" corresponds to ``+1". {Table~\ref{tab:one} shows the detailed instructions given to the human operator to intepret the GUI.} This GUI measure is shown to the human subject every  {35} time steps in the simulation. Fig. \ref{fig:GUI}(c) shows the GUI for measuring the trust feedback $f_i$, where ``full distrust", ``medium trust", ``neutral", ``medium trust", and ``full trust" span the spectrum from 0 to 1 and is a continuous scale. This GUI measure is  shown to the human subject every  {100} time steps.  {After 100 time steps, the operator is asked to provide an estimation of their degree of trust towards each robot in the system based on his/her cumulative interaction with the individual robot.} {The gaps of 35 and 100 time steps are selected not only to ensure that the operator will not be overwhelmed, but also to collect sufficient human trust change and trust level data. For example, a gap of 25 seconds is selected for trust evaluation of 1 robot in~\cite{desai2012modeling}.}

{
\begin{table}[ht!]%
\caption{Instructions for GUI in Fig. \ref{fig:GUI}(b)}
\label{tab:one}
\begin{minipage}{1\columnwidth}
\begin{center}
\begin{tabular}{lll}
  \toprule
  YOUR OBSERVATION   & PERFORMANCE CHANGE & ACTION ON GUI\\
  \hline
  A robot detects an obstacle besides it      & Robot performance increases & Choose ``Gain"\\
  path and does not need to change path       & & \\
  A robot reaches a goal destination & Robot performance increases  & Choose ``Gain"\\
  \hline
  A robot senses an obstacle on its path      & Robot performance decreases & Choose ``Lose"\\
  and needs to change path & & \\
  A robot senses another robot on its path      & Robot performance decreases & Choose ``Lose"\\
  and needs to change path & & \\
  An unknown obstacle on human-planned   
   & Robot performance decreases & Choose ``Lose" \\
  path and activate auto. planning & & \\
     \hline
No change & Robot performance unchanged & Choose ``Unchange"\\
  \bottomrule
\end{tabular}
\end{center}
\bigskip\centering
\end{minipage}
\end{table}%
}

\subsection{Overall Simulation for Trust-Based Multi-Robot Motion Planning with a Human-in-the-Loop}\label{sec:sim_human}


{Before implementation of the overall strategy, we run a training session. The purpose is to collect the human input data and performance measures for parameter identification of the trust model. The elapsed time for the training session is 158.59 seconds. After the training session, we run four iterations of the EM algorithm (256.12 seconds) to learn the parameters until the errors converge within the limit $1\times 10^{-10}$.}

We finally show the simulation results of 3 robot symbolic motion planning with a human-in-the-loop. The goal of the human-robot team is to successfully reach each goal while avoiding all collisions, meeting the global specification (\ref{eq:globalSpec}). For this scenario, trust levels are assumed to be equal at the start of the simulation. 
Fig. \ref{fig:manual_path_final} shows the final paths traveled by all robots under the trust-based switching framework. Note that Robot 1 plans a path passing through obstacles for task efficiency. Fig. \ref{fig:fTrust} shows the evolution of mean trust $\bar{T}_i,~i=1,2,3$ over time (Equation (\ref{eq:trust})). 
Figs. \ref{fig:robot1}-\ref{fig:robot3} show the human performance \YueE{$P_{H,i}$}, robot performance \YueE{$P_{R,i}$}, fault $F_i$, trust belief $bel(T_i(t))$, human intervention $m_i$, trust change $c_i$, and trust feedback $f_i$ for Robot 1-3.


\begin{figure}[h!]
	\center
	\includegraphics[width = 0.5\columnwidth]{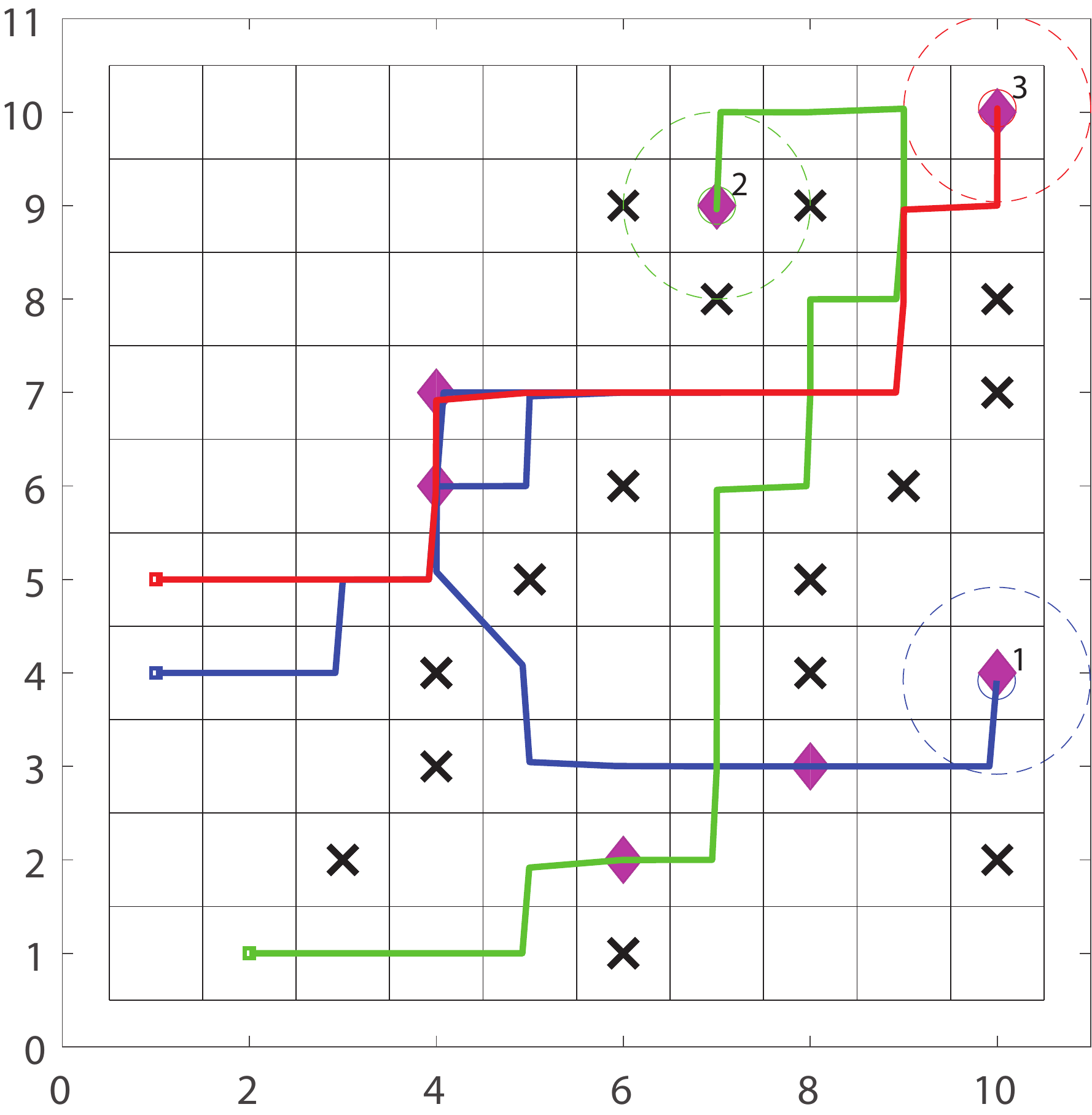}
	\caption{Final paths for 3-robot symbolic motion planning switching between manual and autonomous motion planning mode.}
	\label{fig:manual_path_final}
\end{figure}

\begin{figure}[h!]
	\center
	\includegraphics[width = 1\columnwidth]{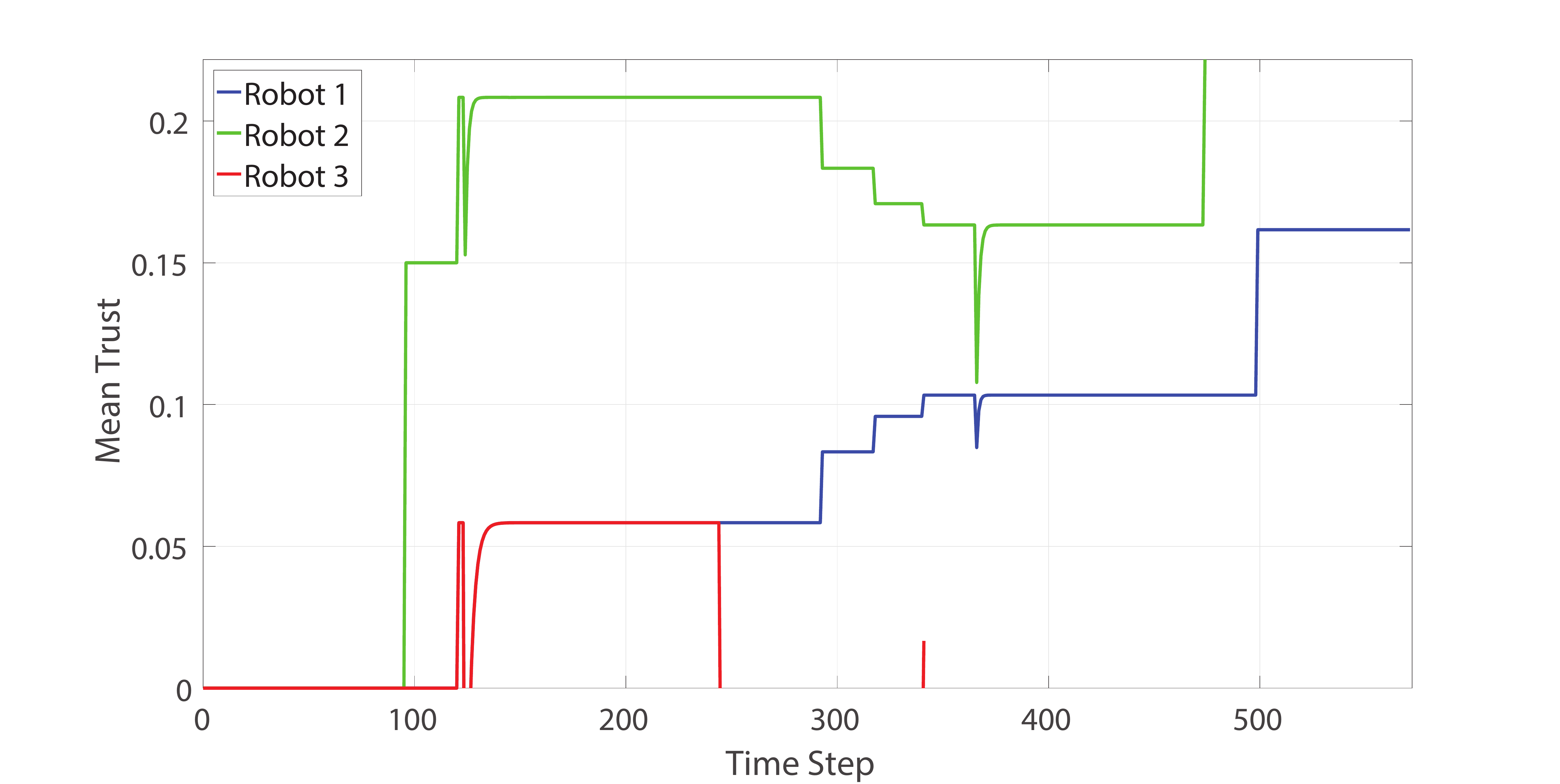}
	\caption{Evolution of mean trust $\bar{T}_i$ for all 3 robots.}
	\label{fig:fTrust}
\end{figure}

\begin{figure}[h!]
	\center
	\includegraphics[width = 1\columnwidth]{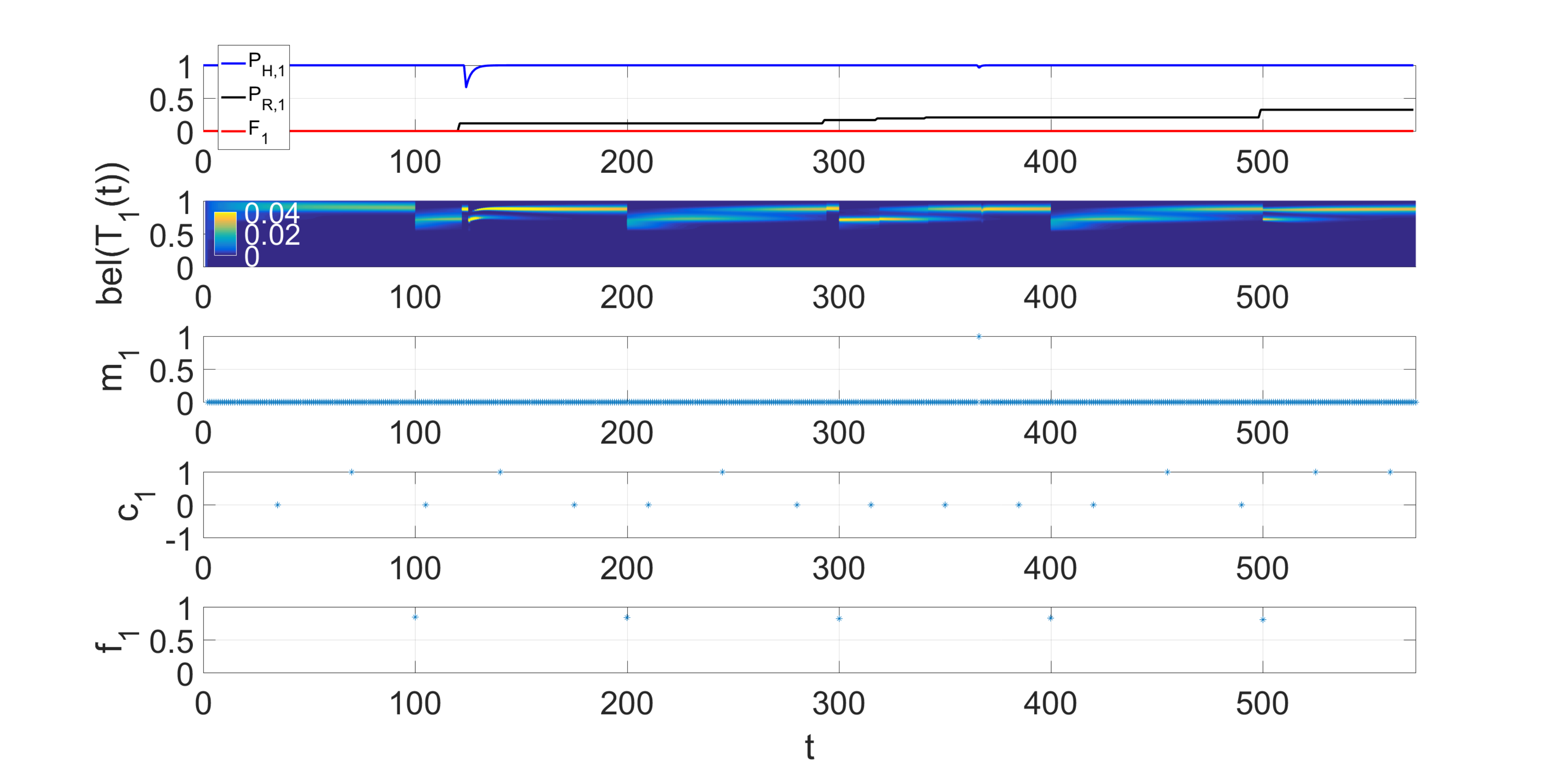}
	\caption{Robot 1: (a) Evolution of human performance \YueE{$P_{H,1}$}, robot performance \YueE{$P_{R,1}$}, and fault $F_1$, (b) trust belief $bel(T_1(t))$, (c) human intervention $m_1$, (d) trust change $c_1$, and (e) trust feedback $f_1$.}
	\label{fig:robot1}
\end{figure}

\begin{figure}[h!]
	\center
	\includegraphics[width = 1\columnwidth]{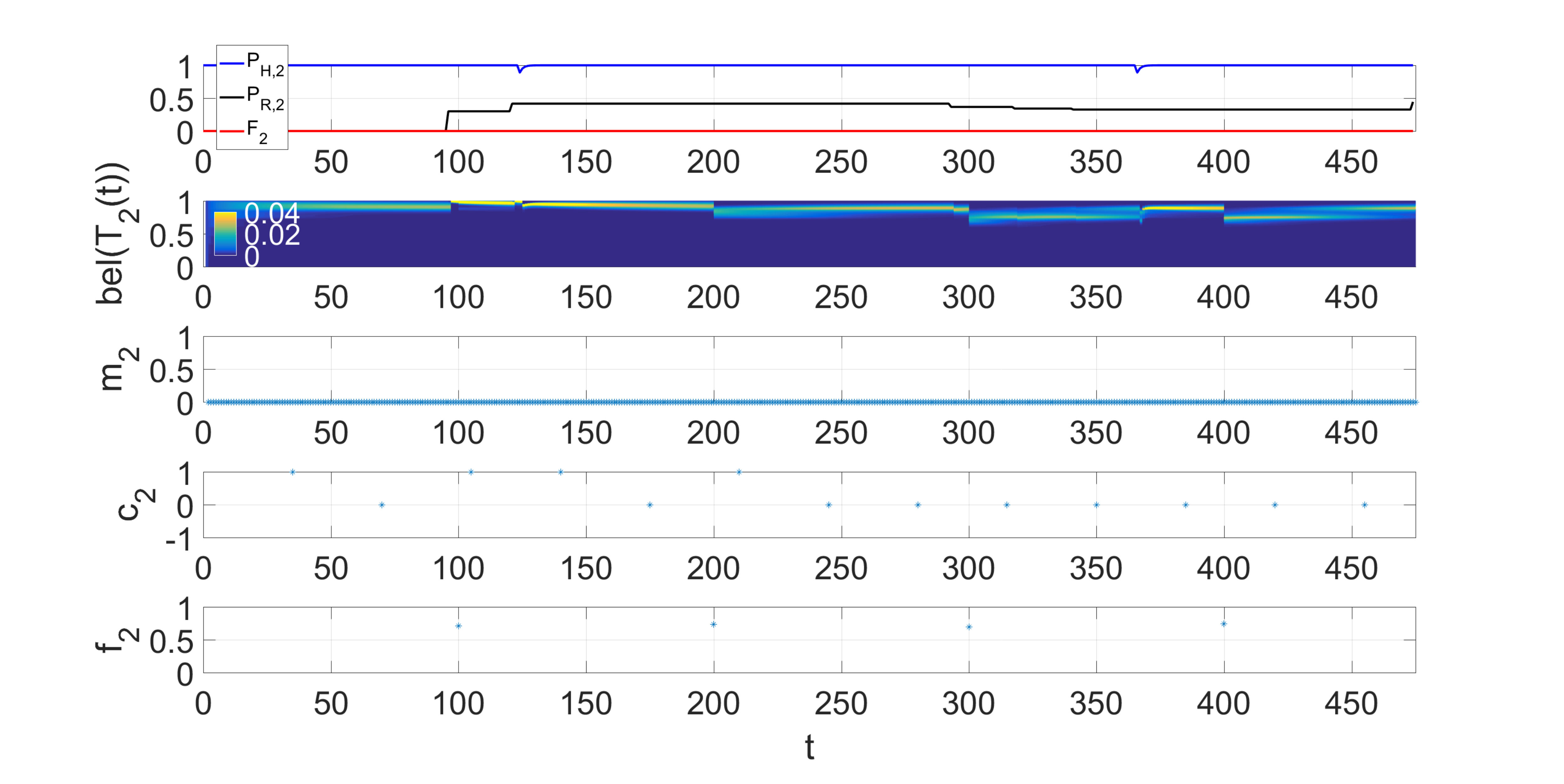}
	\caption{Robot 2: (a) Evolution of human performance \YueE{$P_{H,2}$}, robot performance \YueE{$P_{R,2}$}, and fault $F_2$, (b) trust belief $bel(T_2(t))$, (c) human intervention $m_2$, (d) trust change $c_2$, and (e) trust feedback $f_2$.}
	\label{fig:robot2}
\end{figure}

\begin{figure}[h!]
	\center
	\includegraphics[width = 1\columnwidth]{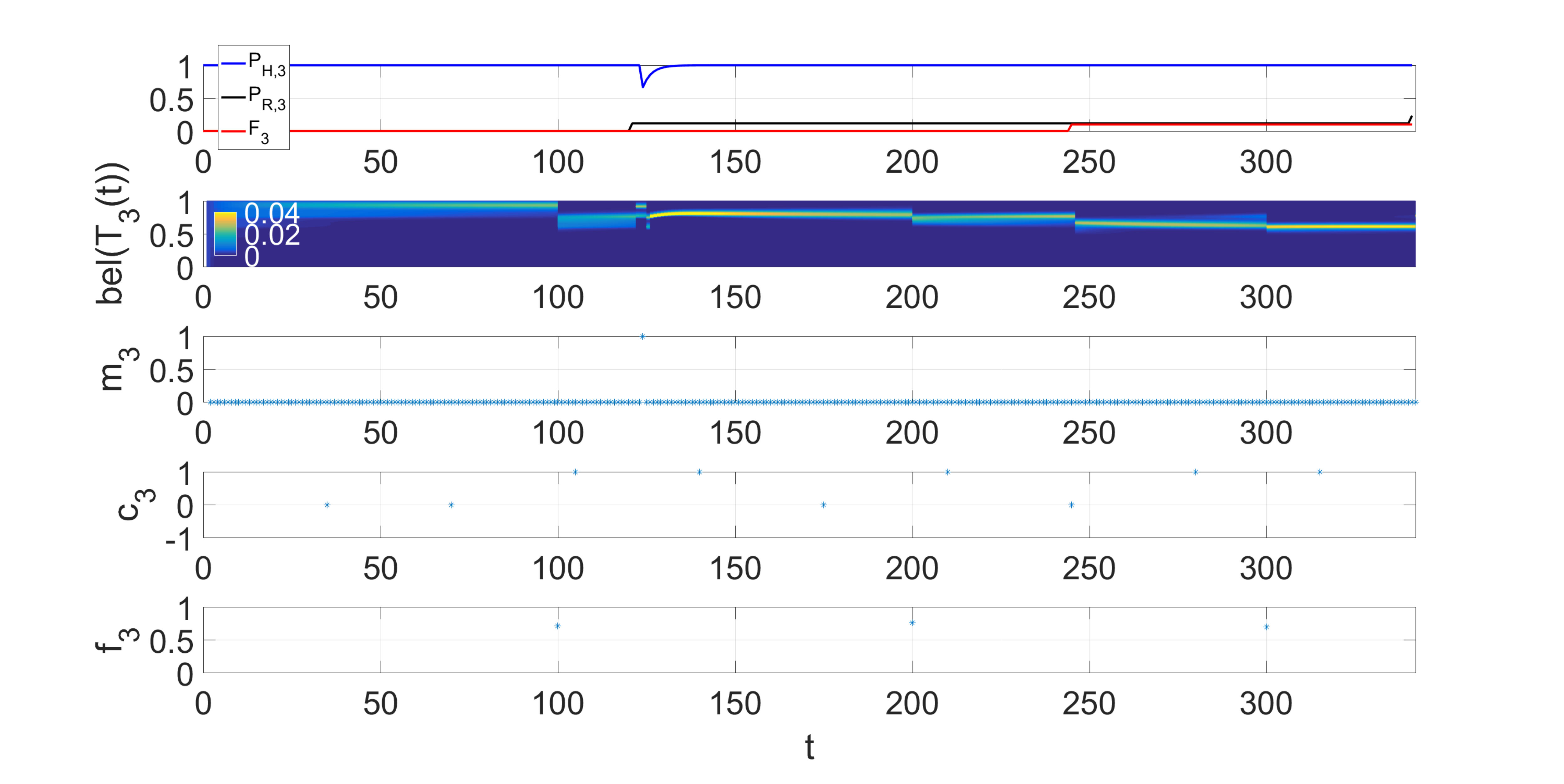}
	\caption{Robot 3: (a) Evolution of human performance \YueE{$P_{H,3}$}, robot performance \YueE{$P_{R,3}$}, and fault $F_3$, (b) trust belief $bel(T_3(t))$, (c) human intervention $m_3$, (d) trust change $c_3$, and (e) trust feedback $f_3$.}
	\label{fig:robot3}
\end{figure}

Initially, Robot 1 is assigned to goals (8,3) and (4,6), Robot 2 to goals (6,2), (7,9), and Robot 3 to goals (10,4), (4,7), (10,10).
At the beginning, Robot 1 senses an obstacle on its path and Robot 2 senses an obstacle on its left side. Therefore, in the first trust change question (see Fig.~\ref{fig:GUI}(b)), Robot 1's trust decreases by 1 and Robot 2's trust increases by 1. That is to say, in the trust question, when a robot senses an obstacle on its path, the human subject should choose ``Gain". When a robot senses an obstacle on its path or meets another robot and hence changes its path, one should choose ``Lose". Otherwise, one should keep choosing ``Unchange".

When all three robots reach their first goals at time step  {$t=124$}, the trust increase of Robot 3 goes beyond the threshold (set as 0.05 here). Hence, Robot 3 requests the human to intervene (see Fig.~\ref{fig:GUI}(a)) and the human plans a new path for Robot 3 (as shown in Fig.~\ref{fig:robot3}(c) with $m_3( {t=124})=1$). However, there is an unknown obstacle in the path. Once this unknown obstacle is sensed, a fault occurs at $t=246$ (Fig.~\ref{fig:robot3}(a)) and Robot 3's trust drops (Fig.~\ref{fig:robot3}(b)). Robot 3 then gives its current goal (10,4) to Robot 1 (whose trust change is the highest at this moment). 
Robot 3 continues to its last left goal (10,10) and stops and stays there at  {$t=342$}.

For Robot 1, as shown in Fig.~\ref{fig:robot1}, $P_{R,1}$ increases at time step $t=122$, and then \YueE{$P_{H,1}$} drops at time step $t=126$ with the trust belief increases and drops, correspondingly. At  {$t=366$}, Robot 1 meets obstacles, leads to trust increase, and requests for human intervention. The human subject plans a path passing through obstacles and reaches its left goals (see Fig. \ref{fig:manual_path_final}). 
At  {$t=500$}, \YueE{$P_{R,1}$} and trust increases accordingly. Robot 1 finally stops at \YueE{$t=572$} after reaching all the goals including the newly assigned goal (10,4). 

Robot 2 is in autonomous mode throughout the simulation. As shown in Fig.~\ref{fig:robot2}, at time steps  {125 and 367,} due to the human interaction with the other two robots, \YueE{$P_{H,2}$} drops and trust drops correspondingly. At time steps 97 and 122, Robot 2 senses obstacles. \YueE{$P_{R,2}$} increases as a result and trust increases accordingly. At $t=319$, \YueE{$P_{R,2}$} drops and trust drops correspondingly. The drop is because Robot 2 meets an obstacle on its planned path. Robot 2 stops at \YueE{$t=475$} after reaching all of its assigned goals.

\section{Conclusions}
\label{sec:conclusion}

In this paper, we have explored methods to address scalability, safety, performance, and adaptability of symbolic motion planning for multi-robot systems that interact with a human operator, with interactions affected by human trust.
A quantitative, dynamic, and probabilistic model is developed to compute human trust in a robot during a collaborative ISR task. 
Scalability is addressed by decomposing portions of the global specification related to goal reachability and obstacle avoidance, while portions that require inter-robot collision avoidance are addressed through a protocol that relies on local communication and control to modify plans as needed during execution.
Trust affects this decomposition adaptively, with more trusted robots assigned more destinations, and with trust decreasing as robots generate faults by coming too close to obstacles.
In addition to implementing methods for obstacle and collision avoidance, safety versus efficiency of planning is addressed by switching between a safe but conservative robot motion planning mode that avoids
obstacles based on an overapproximation of the environment and a riskier but more efficient human planning mode that allows paths between obstacles, with switching mediated by human trust. Finally, deadlock- and livelock-free algorithms are proposed to guarantee the reachability of all goal destinations with a human-in-the-loop.

In the future, there are several other areas that could be explored.
The computational trust model we use here assumes trust in a specific robot evolves independently of all other robots,
when in actuality, evolution of trust for each robot might be interdependent \cite{kellerRice-09}. Other human-to-robot trust models could be used in the same framework.
Moreover, we have assumed that obstacles are placed such that all destinations are reachable by all robots, which would not be true in all environments and would require more careful assignment of destinations.
In some cases, certain destinations might not be reachable by any robot, which would require revising the specification.
 {The simulations in Section \ref{sec:simulation} present a simple environment to demonstrate the overall strategy. However, in real-world applications (e.g. automated storage and retrieval robotic systems) the number of robots, goals and obstacles can be much larger. Correspondingly, human's cognitive workload will increase as well and overload will negatively affect the user performance. To further quantify the workload and the corresponding effects on user performance, a proper workload model will be adopted in our future work.}
\begin{acks}
{The authors would like to thank Dr. Anqi Xu of McGill University for the discussion of the OPTIMo trust model.}
\end{acks}

\bibliographystyle{ACM-Reference-Format}
\bibliography{bibliography}



\medskip

%
%
%
%

\end{document}